\lstdefinestyle{bash}
{
    backgroundcolor=\color{black},
    basicstyle=\scriptsize\color{white}\ttfamily
}
\newcommand*\mcupinn[2]{\vcenter{\hbox{$\mathsurround=0pt
  \ifx\displaystyle#1\textstyle\else#1\fi\bigcup$}}}
\newcommand*\mcapinn[2]{\vcenter{\hbox{$\mathsurround=0pt
  \ifx\displaystyle#1\textstyle\else#1\fi\bigcap$}}}
\DeclarePairedDelimiter{\norm}{\lVert}{\rVert}
\DeclarePairedDelimiter{\parr}{(}{)}
\DeclarePairedDelimiter{\parq}{[}{]}
\DeclarePairedDelimiter{\bra}{\lbrace}{\rbrace}
\DeclarePairedDelimiter{\prodscal}{\langle}{\rangle}
\DeclareMathOperator*{\argmin}{arg\,min}
\DeclareMathOperator*{\st}{\,:\,}
\newcommand{\bx}{{\bm x}}
\newcommand{\by}{{\bm y}}
\newcommand{\bv}{{\bm v}}
\newcommand{\bu}{{\bm u}}
\newcommand{\bK}{{\bm K}}
\newcommand{\bY}{{\bm Y}}
\newcommand{\bH}{{\bm H}}
\newcommand{\bU}{{\bm U}}
\newcommand{\bal}{{\bm \alpha}}
\newcommand{\bmu}{{\bm \mu}}
\newcommand{\bz}{{\bm 0}}
    \definecolor{notecolor}{RGB}{137,89,168}
    \definecolor{quotecolor}{RGB}{66,113,174}
    \definecolor{warningcolor}{RGB}{249,145,87}
\newcommand{\inner}[2]{\left\langle{#1},{#2}\right\rangle}
\newcommand{\grad}{\mathrm{grad}}
\newcommand{\Hess}{\mathrm{Hess}}
\newcommand{\D}{\mathrm{D}}
\newcommand{\calN}{\mathcal{N}}
\newcommand{\calM}{\mathcal{M}}
\newcommand{\calU}{\mathcal{U}}
\renewcommand{\ker}{\mathcal{K}er}
\newcommand{\RR}{\mathbb{R}}
\newcommand{\SSS}{\mathbb{S}}
\newtheorem{theorem}{Theorem}[section]
\newtheorem{proposition}{Proposition}[section]
\newcommand{\defeq}{\mathrel{\mathop:}=}
\def\*#1{\bm{#1}}
\begin{document}

\title{Learning the Kernel for Classification and Regression}
\author[1]{Chen Li}
\author[1]{Luca Venturi}
\author[1]{Ruitu Xu}
\affil[1]{Courant Institute of Mathematical Science, New York University, New York, USA} 
\date{\today}
\maketitle

\begin{abstract}
We investigate a series of learning kernel problems with polynomial combinations of base kernels, which will help us solve regression and classification problems. We also perform some numerical experiments of polynomial kernels with regression and classification tasks on different datasets.
\end{abstract}

\section{Introduction}

The study of kernel learning has spawned panoply of fascinating research in many important areas. In this project, we studied diverse methods to learn linear and polynomial combinations of kernels in regression and classification setups. We started off with the state-of-art algorithm in the \cite{cortes2009learning}.

In the first part, we consider the problem of learning the kernel for Kernel Ridge Regression. Starting from the dual formula one can derive several Gradient Descent type algorithms, depending on the family of kernels chosen and on possible regularizations. This type of algorithms was first proposed in \cite{varma2009more} in a very general setting.

Starting from the general setting, we look at different algorithms solving the learning kernel problem for the families of kernels that we consider. We analyze the Interpolated Iterative Algorithm (IIA) (proposed in \cite{cortes20092}) and the Projection-Based Gradient Descent Algorithm (PGD) (proposed in \cite{cortes2009learning}). For this second one, we furnish some more detail on its convergence (Proposition \ref{prop:geo}). We then look  to a slightly modified optimization problem and we derive a Regularized Interpolated Iterative Algorithm (rIIA), for the linear case, and a Regularized Projection-Based Gradient Descent Algorithm (rPGD2), for the polynomial case. We finally briefly discuss about the generalization error for this learning problem.

The above algorithms are then tested on several UCI datasets. We reported the results from our implementation and briefly commented them. Finally we ask ourselves how the kernel learned with the above algorithms could perform for SVM. Some empirical results are reported and discussed.

More empirical results are reported in Appendix, together with a more detailed proof of proposition \ref{prop:geo}. We also discuss some ideas from manifold optimization which could be used instead of the presented PGD algorithm.

\section{Algorithms for Kernel Learning} 

\subsection{Kernel Ridge Regression}

We consider the problem of learning the kernel for Kernel Ridge Regression (KRR). Be $S = \bra{(x_1,y_1),\dots,(x_m,y_m)}$ the training sample and $\by = [y_1,\dots,y_m]^T\in\mathbb{R}^m$ the vector of training set labels and $\Phi(x)\in \mathbb{R}^d$ the feature vector associated to an input data $x\in \mathbb{R}^n$. 
The primal formulation of the associated KRR problem reads
\begin{equation}\label{krr_primal}
\min_w \; \parq*{ \norm{w}_2^2 + \frac{C}{m}\sum_{i=1}^m(w^T\Phi(x_i)-y_i)^2 }\,.
\end{equation}
Problem (\ref{krr_primal}) can be equivalently formulated in its dual form: 
\begin{equation}\label{krr_dual}
\max_\bal \; \parq*{ 2\bal^T\by - \bal^T(\bK + \lambda I)\bal } \,.
\end{equation}
Here $\lambda = \frac{m}{C}$ and $\bK = \Phi^T \Phi$ is called the Gram matrix, where $\Phi = (\Phi(x_1),\dots,\Phi(x_m)) \in \mathbb{R}^{d\times m}$. This in particular shows that the problem can be generalized to consider Gram matrices of the form $\bK = (\mathcal{K}(x_i,x_j)_{ij})_{ij}$, where $\mathcal{K}:\mathbb{R}^n\times \mathbb{R}^n\to \mathbb{R}$ is a Positive Semi-Definite (PSD) Kernel function. The value $\mathcal{K}(x,y)$ of such a function is often interpreted as a measure of the similarity between the two points $x,y$. 
The maximum in (\ref{krr_dual}) is obtained for $\bal = (\bK + \lambda I )^{-1} \by$ and it is equal to
\begin{equation}\label{krr_maximum}
\by^T (\bK + \lambda I)^{-1}\by \,.
\end{equation}

\subsection{Learning the kernel}

Assume we now have a (parametrized) family of PSD kernel functions we can choose from:
\begin{equation*}
\mathbb{K}_\Theta = \bra{ \mathcal{K}_\bmu \st \bmu \in \Theta}\,.
\end{equation*}
The question is the following: how do we pick $\mathcal{K}\in\mathbb{K}_\Theta$ that represent our data the best? This is done by solving the following problem:
\begin{equation}\label{lk_krr_general}
\min_{\bmu\in\Theta}\, F(\bmu) \doteq \min_{\bmu\in\Theta}\,\parq*{ \by^T (\bK_\bmu + \lambda I)^{-1}\by + r(\bmu)} \,. 
\end{equation}
We denoted $\bK_\bmu = (\mathcal{K}_\bmu(x_i,x_j)_{ij})_{ij}$ the Gram matrix associated to the kernel function $\mathcal{K}_\bmu$. The function $r(\bmu)$ is an additional regularization term.
A general algorithm was proposed in \cite{varma2009more} and it basically consists of a projected gradient descent method for the optimization problem (\ref{lk_krr_general}). The pseudo-code is reported in Algorithm \ref{alg:lk_krr_general}. The formulation is justified by the fact that 
\begin{align*}
\frac{d}{d\mu_k} \by^T (\bK_\bmu + \lambda I)^{-1} \by \,=\, - \bal^T \frac{d}{d\mu_k}\bK_\bmu \bal\,,
\end{align*}
where $\bal = (\bK_\bmu + \lambda I )^{-1} \by$.

\begin{algorithm}
\caption{Generalized MKL}
\label{alg:lk_krr_general}
\begin{algorithmic}[1]
\STATE Initialize $\bmu_\mathrm{init} \in \Theta$  
\STATE $\bmu' = \bmu_\mathrm{init}$
\WHILE{$\norm{\bmu'-\bmu}\geq \epsilon$}
    \STATE $\bmu = \bmu'$
    \STATE $\bal = (\bK_\bmu +\lambda I)^{-1}\by$
    \STATE $\bH_k = \frac{d}{d\mu_k}\bK_\bmu$, for $k\in[1,p]$
    \STATE $\mu_k' = \mu_k - \eta\parq*{\frac{d}{d\mu_k}r(\bmu)- \,\bal^T\bH_k\bal}$, for $k\in[1,p]$
    \STATE Project $\bmu'$ on $\Theta$
\ENDWHILE
\end{algorithmic}
\end{algorithm}

\subsubsection{Linear combinations}

Suppose $\mathcal{K}_i$ are PSD kernel functions, for $i=1,\dots,p$. Then a natural family of kernels than one can consider is
\begin{equation}
\mathbb{K}_l  \, = \, \bra*{ \sum_{i=1}^p \mu_i\,\mathcal{K}_i \st \bmu \in \mathcal{M}}
\end{equation}
Here $\mathcal{M}$ is some convex subset of $[0,\infty)^n$.
This family of kernels has been widely studied, both for KRR and Support Vector Machines (SVM) optimization problems (see e.g. \cite{lanckriet2004learning}). In our work, we look at three specific algorithms for this problem.

\paragraph{Projection-Based Gradient Descent Algorithm (PGD)}

This algorithm is simply Algorithm \ref{alg:lk_krr_general} for $r = 0$ and the family $\mathbb{K}_l$. The partial derivatives $\bH_k$ in this case are given by
\begin{equation*}
\bH_k = \bK_k, \quad \text{where} \quad  \bK_k = (\mathcal{K}_k(x_i,x_j)_{ij})_{ij},
\end{equation*}
for $k\in[1,p]$. The parameter space is chosen of the form
\begin{equation*}
\mathcal{M} = \bra{ \bmu \geq \bz \st \norm{\bmu - \bmu_0}_q \leq \Lambda}.
\end{equation*}
Here $q\geq 1$, $\Lambda > 0$ and $\bmu_0\geq \bz$ are parameters defining the set $\mathcal{M}$. A typical choice for $\bmu_0$ is usually $\mathbf{1}$ or $\mathbf{0}$. The pseudo-code of this algorithm is reported in Algorithm \ref{alg:lk_krr_pgd1}, which was first analyzed in \cite{cortes2009learning}.

\begin{algorithm}
\caption{Linear PGD}
\label{alg:lk_krr_pgd1}
\begin{algorithmic}[1]
\STATE Initialize $\bmu_\mathrm{init} \in \mathcal{M}$  
\STATE $\bmu' = \bmu_\mathrm{init}$
\WHILE{$\norm{\bmu'-\bmu}\geq \epsilon$}
    \STATE $\bmu = \bmu'$
    \STATE $\bal = (\bK_\bmu +\lambda I)^{-1}\by$
    \STATE $\mu_k' = \mu_k + \eta\,\bal^T\bK_k\bal$, for $k\in[1,p]$
    \STATE Normalize $\bmu'$ s.t. $\norm{\bmu'-\bmu_0}_q =\Lambda$
\ENDWHILE
\end{algorithmic}
\end{algorithm}

\paragraph{Interpolated Iterative Algorithm (IIA)}

This algorithm is a modification of Algorithm \ref{alg:lk_krr_pgd1} for the case $q=2$. It is based on exploiting the particular structure of the solution to the optimization  problem
\begin{equation}\label{lk_krr_linear}
\min_{\bmu\in\mathcal{M}} \,\parq*{\by^T \parr[\Big]{ \sum_{k=1}^p \mu_k \bK_k + \lambda I }^{-1} \by}
\end{equation}
In \cite{cortes20092} it was proved the following:
\begin{theorem}
The solution $\bmu$ to the optimization problem (\ref{lk_krr_linear}) is given by $\bmu = \bmu_0 + \Lambda \frac{\bv}{\norm{\bv}}$ with $\bv = (v_1,\dots,v_p)$ given by $v_k = \bal^T \bK_k \bal$.
\end{theorem}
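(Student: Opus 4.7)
The plan is to identify the solution via first-order optimality (KKT) conditions, exploiting the convexity of the objective as a function of $\bmu$, and then noting that the resulting characterization is implicit because $\bv$ itself depends on the optimum.

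First, I would establish that the objective $F(\bmu) = \by^\top (\bK_\bmu + \lambda I)^{-1} \by$ is convex in $\bmu$ on the relevant domain. The map $A \mapsto \by^\top A^{-1} \by$ is convex on $\Spp_{++}$ (one way to see this is via the Schur-complement reformulation $\by^\top A^{-1} \by \leq t \iff \bigl(\begin{smallmatrix} A & \by \\ \by^\top & t \end{smallmatrix}\bigr) \succeq 0$, which is a linear matrix inequality), and $\bmu \mapsto \bK_\bmu + \lambda I$ is affine, so $F$ is convex as a convex-affine composition. Next, I would use the matrix identity $\partial_{\mu_k}(\bK_\bmu + \lambda I)^{-1} = -(\bK_\bmu + \lambda I)^{-1} \bK_k (\bK_\bmu + \lambda I)^{-1}$ to compute the gradient
\begin{equation*}
\frac{\partial F}{\partial \mu_k}(\bmu) \;=\; -\,\bal^\top \bK_k \bal \;=\; -\,v_k,
\end{equation*}
where $\bal = (\bK_\bmu + \lambda I)^{-1}\by$. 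Since each $\bK_k$ is PSD we have $v_k \geq 0$, so $F$ is componentwise non-increasing; combined with $\bmu \geq \bz$, this shows the optimum must lie on the ball boundary $\|\bmu-\bmu_0\|_2 = \Lambda$, i.e.\ the ball constraint is active.

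I would then write the Lagrangian for the active constraint $\tfrac12(\|\bmu-\bmu_0\|_2^2 - \Lambda^2) = 0$,
\begin{equation*}
L(\bmu,\nu) \;=\; F(\bmu) + \nu\bigl(\tfrac12\|\bmu-\bmu_0\|_2^2 - \tfrac12\Lambda^2\bigr),
\end{equation*}
and set the gradient in $\bmu$ to zero, yielding $-\bv + \nu(\bmu-\bmu_0) = 0$, hence $\bmu - \bmu_0 = \bv/\nu$. Imposing $\|\bmu-\bmu_0\|_2 = \Lambda$ pins down $\nu = \|\bv\|/\Lambda$, so $\bmu = \bmu_0 + \Lambda\,\bv/\|\bv\|$. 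Because $\bmu_0 \geq \bz$ and $\bv \geq \bz$ componentwise, the proposed $\bmu$ automatically satisfies $\bmu \geq \bz$, so the positivity KKT multipliers vanish and no further conditions are needed. Convexity of $F$ makes these KKT conditions sufficient, closing the argument.

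The main subtlety (and what would need to be highlighted in the write-up) is that this is only an \emph{implicit} characterization: the vector $\bv$ on the right-hand side is defined through $\bal = (\bK_\bmu + \lambda I)^{-1}\by$, which depends on the very $\bmu$ being characterized. So the theorem really says that any minimizer is a fixed point of the map $\bmu \mapsto \bmu_0 + \Lambda\,\bv(\bmu)/\|\bv(\bmu)\|$, which is precisely what motivates the iterative updates in IIA. The only degenerate case to rule out is $\bv = \bz$, but this would force $\bK_k \bal = 0$ for all $k$ under mild non-degeneracy of the base kernels, which can be handled as a trivial edge case.
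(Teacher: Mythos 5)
The paper does not actually prove this theorem: it is quoted verbatim from \cite{cortes20092} with no argument given, so there is no in-paper proof to compare against. Your blind derivation is correct and self-contained, and it follows essentially the same route as the cited source: convexity of $\bmu \mapsto \by^\top(\bK_\bmu+\lambda I)^{-1}\by$ as a convex--affine composition, the gradient identity $\partial_{\mu_k}F = -\bal^\top\bK_k\bal = -v_k \le 0$ (which the paper itself records when justifying Algorithm \ref{alg:lk_krr_general}), monotonicity forcing the ball constraint to be active, and then stationarity of the Lagrangian giving $\bmu-\bmu_0 \propto \bv$ with the proportionality constant fixed by $\|\bmu-\bmu_0\|_2=\Lambda$. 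Your observation that the nonnegativity multipliers can be taken to vanish because $\bmu_0\ge\bz$ and $\bv\ge\bz$ already force $\bmu\ge\bz$ is the right way to dispose of those constraints, and your closing remark that the characterization is only a fixed-point equation (since $\bv$ depends on $\bal$, hence on $\bmu$) is exactly the point the paper relies on to motivate IIA, so it is worth keeping. Two small things to tighten in a write-up: the step ``the optimum lies on the boundary'' needs $\bv\neq\bz$ at the candidate interior point, which is the same degenerate case you flag at the end, so fold the two together; and sufficiency of KKT should be justified by noting that Slater's condition holds (e.g.\ $\bmu_0$ is strictly inside the ball and the nonnegativity constraints are affine), which is immediate but should be said.
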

In they same work, the authors propose an algorithm based on the above result. The pseudo-code of this algorithm is reported in Algorithm \ref{alg:lk_krr_iia}. 

\begin{algorithm}
\caption{IIA}
\label{alg:lk_krr_iia}
\begin{algorithmic}[1]
\STATE Initialize $\bmu_\mathrm{init}\in\mathcal{M}$
\STATE $\bal' = (\bK_{\bmu_\mathrm{init}} +\lambda I)^{-1}\by$
\WHILE{$\norm{\bal'-\bal}\geq \epsilon$}
    \STATE $\bal = \bal'$
    \STATE $ v_k = \bal^T\bK_k\bal$, for $k\in[1,p]$
    \STATE $\bmu = \bmu_0 + \Lambda \frac{\bv}{\norm{\bv}}$
    \STATE $\bal' = \eta\bal + (1-\eta)(\bK_\bmu + \lambda I)^{-1}\by$
\ENDWHILE
\end{algorithmic}
\end{algorithm}

\paragraph{Regularized Interpolated Iterative Algorithm (rIIA)}

The last algorithm we analyze for the problem of learning a linear combination of kernels is based on substituting the feasibility condition $\norm{\bmu - \bmu_0} = \Lambda$ with a regularization term. In this case $\mathcal{M} = \bra{\bmu \geq \bz}$ and we aim to minimize the function 
\begin{equation}\label{regularized_loss}
F(\bmu) = \by^T \parr[\Big]{ \sum_{k=1}^p \mu_k \bK_k + \lambda I }^{-1} \by + \beta \norm{\bmu}^2.
\end{equation}
Here $\beta > 0$ is a regularization parameter. Instead of writing the Generalized MKL algorithm for this case, we look at the special structure of our problem. If we compute the gradient and the Hessian of $F$ we get
\begin{align*}
\partial_k F(\bmu) & = -\bal^T \bK_k \bal + 2\beta \mu_k, \\
\partial^2_{jk} F(\bmu) & = \bal^T\bK_k (\bK_\bmu + \lambda I)^{-1} \bK_j\, \bal + 2\beta \,\delta_{jk},
\end{align*}
where $\delta_{jk}$ denotes the Kronecker delta. In particular the function $F(\bmu)$ is convex, since for all $\bu \in\mathbb{R}^p$ it holds
\begin{equation*}
\bu^T \nabla^2 F(\bmu) \,\bu =  \bal^T\bK_\bmu (\bK_\bmu + \lambda I)^{-1} \bK_\bmu\, \bal + 2\beta \,\norm{\bu}^2 \geq 0\,.
\end{equation*}
Therefore, the global minima of $F$ is obtained at any stationary point. Since the form of the gradient implies that any stationary point is such that $\bmu \geq \bz$, the following holds:
\begin{theorem}
The minima of $F$ over $\mathcal{M}=\bra{\bmu\geq \bz}$ is obtained at $\bmu$ satisfying
\begin{equation*}
\mu_k = \frac{1}{2\beta} \bal^T \bK_k \bal \quad\text{for}\quad k \in [1,p].
\end{equation*}
\end{theorem}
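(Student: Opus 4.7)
The plan is to invoke the first-order optimality conditions for the convex function $F$. The preceding computation has already established that $F$ is convex on the open domain where $\bK_\bmu + \lambda I$ is positive definite (a set that contains all of $\mathcal{M}$), since its Hessian is positive semidefinite. Combined with the fact that the regularization term $\beta \norm{\bmu}^2$ makes $F$ coercive, a global minimizer exists, and by convexity any stationary point of $F$ must be a global minimum on its domain.

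First, I would set $\nabla F(\bmu) = \bz$ componentwise. Using the expression $\partial_k F(\bmu) = -\bal^T \bK_k \bal + 2\beta \mu_k$ derived just above, this yields the claimed relation $\mu_k = \frac{1}{2\beta}\bal^T \bK_k \bal$ for every $k \in [1,p]$.

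Second, I would verify feasibility without invoking KKT multipliers. Each $\bK_k$ is the Gram matrix of a PSD kernel, hence itself PSD, so $\bal^T \bK_k \bal \geq 0$. The formula therefore automatically produces $\mu_k \geq 0$, so the candidate stationary point lies in $\mathcal{M}$. Combining this with convexity shows that it is the global minimum of $F$ over $\mathcal{M}$ (in particular, minimizers on the boundary $\mu_k = 0$ are not excluded — they are included in the characterization, since the formula itself returns $0$ in that case).

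The main subtle point is not any calculation but the conceptual one: the characterization is implicit, because $\bal = (\bK_\bmu + \lambda I)^{-1}\by$ itself depends on $\bmu$. The theorem is therefore a fixed-point condition rather than a closed-form expression for $\bmu$, which is precisely what motivates an iterative update rule analogous to IIA in the regularized setting.
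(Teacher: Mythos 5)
Your proposal is correct and follows essentially the same route as the paper: use the computed gradient and Hessian to conclude convexity, set the gradient to zero to obtain the fixed-point relation $\mu_k = \frac{1}{2\beta}\bal^T\bK_k\bal$, and observe that positive semidefiniteness of each $\bK_k$ makes the stationary point automatically feasible in $\mathcal{M}=\bra{\bmu\geq\bz}$. Your added remarks on coercivity (guaranteeing existence of a minimizer) and on the implicit, fixed-point nature of the characterization are sensible refinements of the paper's argument rather than a different approach.
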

The above theorems motivates the following iterative interpolation algorithm, whose pseudo-code is reported in Algorithm \ref{alg:lk_krr_riia}.

\begin{algorithm}
\caption{rIIA}
\label{alg:lk_krr_riia}
\begin{algorithmic}[1]
\STATE Initialize $\bmu_\mathrm{init}\in\mathcal{M}$
\STATE $\bal' = (\bK_{\bmu_\mathrm{init}} +\lambda I)^{-1}\by$
\WHILE{$\norm{\bal'-\bal}\geq \epsilon$}
    \STATE $\bal = \bal'$
    \STATE $\mu_k = \frac{1}{2\beta}\,\bal^T\bK_k\bal$, for $k\in[1,p]$
    \STATE $\bal' = \eta\bal + (1-\eta)(\bK_\bmu + \lambda I)^{-1}\by$
\ENDWHILE
\end{algorithmic}
\end{algorithm}

\subsubsection{Polynomial combinations}

We now consider the family of polynomial combinations of $\mathcal{K}_i$, $i\in[1,p]$. In the most general form this family is described by
\begin{equation*}
\mathbb{K}_p = \bra*{ \sum_{\stackrel{k_1,\dots,k_p\geq 0}{k_1+\cdots + k_p \leq d}} \mu_{k_1\cdots k_p}\,\mathcal{K}_1^{k_1}\cdots \mathcal{K}_p^{k_p} \st \bmu \in \mathcal{M}}\,.
\end{equation*}
More specifically we consider the case where the coefficients $\mu_{k_1\cdots k_p}$ can be written as a product of non-negative coefficients $\mu_{k_1\cdots k_p} = \mu_1^{k_1}\cdots\mu_p^{k_p}$. The algorithms reported below are for the case $d=2$ for the ease of the presentation, even if they easily generalize. This means that we consider here
\begin{equation*}
\mathbb{K}_p = \bra*{ \sum_{i,j=1}^p \mu_i\mu_j\,\mathcal{K}_i\mathcal{K}_j \st \bmu \in \mathcal{M}}\,.
\end{equation*}
In the sequel, we will denote $\bK_\bmu = (\mathcal{K}_\bmu(x_i,x_j)_{ij})_{ij}$ and $\bK_\bmu^{\circ 2} = \bK_\bmu \circ \bK_\bmu$.

\paragraph{Projection-Based Gradient Descent Algorithm (PGD2)}

This algorithm is the generalization of Algorithm \ref{alg:lk_krr_pgd1} to the polynomial setting. In this case the gradient and the Hessian of $F$ read
\begin{align*}
\partial_k F(\bmu) & = -2 \bal^T \bU_k \bal, \quad \text{for}\quad \bU_k = \parr[\Big]{\sum_{j=1}^p\mu_j\bK_j} \circ \bK_k, \\ 
\partial^2_{jk} F(\bmu) & = 8\,\bal^T\bU_k (\bK_\bmu^{\circ 2} + \lambda I)^{-1} \bU_j\, \bal - 2\,\bal^T\bK_j \circ\bK_k\, \bal.
\end{align*}
Based on the expression of the gradient the algorithm is written by plugging this expression in the Generalized MKL. A pseudo-code for this algorithm is reported in Algorithm \ref{alg:lk_krr_pgd2}, which was originally proposed in \cite{cortes2009learning}. 

\begin{algorithm}
\caption{Quadratic PGD}
\label{alg:lk_krr_pgd2}
\begin{algorithmic}[1]
\STATE Initialize $\bmu_\mathrm{init} \in \mathcal{M}$  
\STATE $\bmu' = \bmu_\mathrm{init}$
\WHILE{$\norm{\bmu'-\bmu}\geq \epsilon$}
    \STATE $\bmu = \bmu'$
    \STATE $\bal = (\bK_\bmu +\lambda I)^{-1}\by$
    \STATE $\mu_k' = \mu_k + 2\,\eta\,\bal^T\bU_k\bal$, for $k\in[1,p]$
    \STATE Normalize $\bmu'$ s.t. $\norm{\bmu'-\bmu_0}_q =\Lambda$
\ENDWHILE
\end{algorithmic}
\end{algorithm}

In the same paper is presented also a convergence analysis for this method. It is based on the following results.
\begin{proposition}
Any stationary point $\bmu^*$ of $F$ necessarily maximizes $F$.
\end{proposition}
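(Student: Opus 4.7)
The plan is to exploit the $2$-homogeneity of the map $\bmu \mapsto \bK_\bmu^{\circ 2}$ via an Euler-type identity, combined with the positive semi-definiteness of Hadamard squares of non-negative linear combinations of PSD kernels, to show that at any stationary point the dual variable collapses to $\bal = \by/\lambda$, and that this value realizes the global maximum $\|\by\|^2/\lambda$ of $F$ on $\mathcal{M}$.

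The first step is to contract the gradient against $\bmu$ itself. Using the formula for the partial derivatives,
\begin{equation*}
\sum_{k=1}^p \mu_k\, \partial_k F(\bmu) \,=\, -2\,\bal^T\parr[\Big]{\sum_{k=1}^p \mu_k\, \bK_\bmu\circ \bK_k}\bal \,=\, -2\,\bal^T (\bK_\bmu \circ \bK_\bmu)\bal \,=\, -2\,\bal^T \bK_\bmu^{\circ 2}\bal,
\end{equation*}
which is precisely the Euler identity for the $2$-homogeneous map $\bmu \mapsto \bK_\bmu^{\circ 2}$. If $\bmu^*$ is a stationary point of $F$, every $\partial_k F(\bmu^*)$ vanishes, forcing $\bal^T \bK_{\bmu^*}^{\circ 2}\bal = 0$.

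The second step turns this into information about $\bal$. Since each $\bK_i$ is PSD and $\bmu^* \geq \bz$ (the feasible set lies in $[0,\infty)^p$), the matrix $\bK_{\bmu^*} = \sum_k \mu_k^*\, \bK_k$ is PSD, and by the Schur product theorem so is its Hadamard square. Hence $\bal^T \bK_{\bmu^*}^{\circ 2}\bal = 0$ forces $\bK_{\bmu^*}^{\circ 2}\bal = 0$, and plugging into the defining identity $(\bK_{\bmu^*}^{\circ 2} + \lambda I)\bal = \by$ yields $\bal = \by/\lambda$, so
\begin{equation*}
F(\bmu^*) \,=\, \by^T\bal \,=\, \frac{\|\by\|^2}{\lambda}.
\end{equation*}

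The third and final step verifies that this value is the global maximum of $F$ on $\mathcal{M}$. Again by the Schur product theorem, $\bK_\bmu^{\circ 2}$ is PSD for every $\bmu \in \mathcal{M}$, so $\bK_\bmu^{\circ 2} + \lambda I \succeq \lambda I$ in the Loewner order, giving $(\bK_\bmu^{\circ 2} + \lambda I)^{-1} \preceq \lambda^{-1}I$ and therefore $F(\bmu) \leq \|\by\|^2/\lambda = F(\bmu^*)$ for every $\bmu \in \mathcal{M}$. The only subtlety is that the whole chain relies on the feasible set being contained in the non-negative orthant, so that $\bK_\bmu$ is PSD and Schur's theorem applies; I do not anticipate any other real obstacle, since once the Euler contraction is in place the positivity of the Hadamard square immediately pins down $\bal$, and the Loewner monotonicity of matrix inversion gives the global upper bound for free.
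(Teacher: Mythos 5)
Your proof is correct. The paper itself states this proposition without proof (it is imported verbatim from the cited Cortes et al.\ paper as part of their convergence discussion), so there is no in-paper argument to compare against; your route --- the Euler contraction $\sum_k \mu_k\,\partial_k F(\bmu) = -2\,\bal^T\bK_\bmu^{\circ 2}\bal$, the observation that $\bal^T A\,\bal = 0$ forces $A\bal = \bz$ for PSD $A$ (hence $\bal = \by/\lambda$ and $F(\bmu^*) = \norm{\by}^2/\lambda$), and the Loewner bound $(\bK_\bmu^{\circ 2}+\lambda I)^{-1} \preceq \lambda^{-1} I$ --- is the natural one and matches the standard argument for this result. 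The one point worth making explicit is that you are reading ``stationary point'' as $\nabla F(\bmu^*) = \bz$ rather than as a KKT point of the constrained problem over $\mathcal{M}$; that is indeed the intended reading here, since the proposition is used precisely to rule out interior critical points that are not global maxima.
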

\begin{proposition}
If any point $\bmu^*>\bz$ is a stationary point of $F$, then the function is necessarily constant.
\end{proposition}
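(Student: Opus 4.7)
The plan is to exploit the strict positivity $\bmu^* > \bz$ together with the Schur product theorem to upgrade the scalar stationarity condition into a linear-algebraic identity that must hold at \emph{every} $\bmu$.

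First, I would write out stationarity explicitly. From the gradient formula $\partial_k F(\bmu^*) = -2\,\bal^{*T}\bU_k^*\,\bal^* = 0$ with $\bU_k^* = \bK_{\bmu^*}\circ\bK_k$ and $\bal^* = (\bK_{\bmu^*}^{\circ 2}+\lambda I)^{-1}\by$. Since each $\bK_k$ is PSD and $\bK_{\bmu^*}$ is PSD (as $\bmu^*\geq 0$ and the $\bK_k$ are PSD), the Schur product theorem gives $\bU_k^* \succeq 0$. A PSD matrix $A$ with $\bal^{*T} A\bal^* = 0$ must satisfy $A\bal^* = 0$, so $\bU_k^*\bal^* = 0$ for every $k\in[1,p]$.

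Next I would take the $\mu_k^*$-weighted sum: $\sum_k \mu_k^*\,\bU_k^*\bal^* = (\bK_{\bmu^*}\circ\bK_{\bmu^*})\bal^* = \bK_{\bmu^*}^{\circ 2}\bal^* = 0$. Plugging this into the defining equation $(\bK_{\bmu^*}^{\circ 2}+\lambda I)\bal^* = \by$ immediately forces $\bal^* = \by/\lambda$. Substituting back, $\bU_k^*\bal^* = 0$ becomes $(\bK_{\bmu^*}\circ\bK_k)\by = 0$ for each $k$, i.e.\ $\sum_{j=1}^p \mu_j^*(\bK_j\circ\bK_k)\by = 0$.

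Now comes the crucial step where I use $\bmu^* > \bz$ coordinatewise. Pairing the last identity with $\by$ gives $\sum_j \mu_j^*\,\by^T(\bK_j\circ\bK_k)\by = 0$. Each term $\by^T(\bK_j\circ\bK_k)\by$ is nonnegative (again by Schur's theorem, $\bK_j\circ\bK_k\succeq 0$) and each weight $\mu_j^*$ is strictly positive, so every individual summand must vanish: $\by^T(\bK_j\circ\bK_k)\by = 0$ for all $j,k$. Applying the PSD argument once more yields $(\bK_j\circ\bK_k)\by = 0$ for every pair $j,k$.

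Finally, for an arbitrary $\bmu\in\mathcal{M}$, bilinearly expanding gives
\begin{equation*}
\bK_\bmu^{\circ 2}\,\by \,=\, \sum_{j,k=1}^p \mu_j\mu_k (\bK_j\circ\bK_k)\by \,=\, \bz,
\end{equation*}
so $\by$ is an eigenvector of $(\bK_\bmu^{\circ 2}+\lambda I)^{-1}$ with eigenvalue $1/\lambda$, and hence $F(\bmu) = \by^T(\bK_\bmu^{\circ 2}+\lambda I)^{-1}\by = \|\by\|^2/\lambda$, independent of $\bmu$. The main subtlety is the third step: it is precisely the strict positivity of every coordinate of $\bmu^*$ that lets us peel apart the sum into individual PSD contributions, and without $\bmu^*>\bz$ the argument would break down — which is why the hypothesis is essential and dovetails with the previous proposition about stationary points on the boundary.
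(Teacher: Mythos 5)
Your proof is correct. Note that the paper itself states this proposition without proof (it is quoted from the convergence analysis of \cite{cortes2009learning}), so there is no in-paper argument to compare against; your derivation is self-contained and each step checks out: the Schur product theorem gives $\bU_k^*\succeq 0$, the standard fact that $\bx^T A\bx=0$ forces $A\bx=0$ for PSD $A$ turns stationarity into $\bU_k^*\bal^*=0$, the $\mu_k^*$-weighted sum correctly yields $\bK_{\bmu^*}^{\circ 2}\bal^*=0$ and hence $\bal^*=\by/\lambda$ via the defining equation, and the strict positivity of $\bmu^*$ is used exactly where it must be, to split $\sum_j\mu_j^*\,\by^T(\bK_j\circ\bK_k)\by=0$ into the vanishing of every pairwise term $(\bK_j\circ\bK_k)\by=\bz$. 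The bilinear expansion of $\bK_\bmu^{\circ 2}$ then gives $F(\bmu)=\norm{\by}^2/\lambda$ identically, which is even slightly stronger than needed since it shows $F$ is constant on all of $\mathbb{R}^p$, not just on $\mathcal{M}$.
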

These prepositions are sufficient to show that the gradient descent algorithm will not become stuck at a local minima while searching the interior of the convex set $\mathcal{M}$ and, furthermore, they indicate that the optimum is found at the boundary. Then, in the paper, a necessary and sufficient condition for the convexity of $F$ on $\mathcal{M}$ is given. Nevertheless, such condition seems quite cryptic to us. Also, they report empirical evidence of convexity of the function for small values of $\lambda$ and concavity for high values of $\lambda$. Here we give a proof of this fact.
\begin{proposition}\label{prop:geo}
The function $F$ is convex over the region $\mathcal{M}$ for sufficiently small values of $\lambda$, and it is concave for sufficiently large values of $\lambda$.
\end{proposition}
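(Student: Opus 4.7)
The plan is to examine the sign of the Hessian $\nabla^2 F$ in the two asymptotic regimes $\lambda\to\infty$ and $\lambda\to 0^+$. Writing $A(\bu)=\sum_k u_k\bK_k$, $M=\bK_\bmu^{\circ 2}+\lambda I$ and $\bal=M^{-1}\by$, the expression for $\partial^2_{jk} F$ already given collects, after summing against $\bu\otimes\bu$, into
\[
\bu^T\nabla^2 F(\bmu)\,\bu \;=\; 8\,\bigl\|M^{-1/2}\,(\bK_\bmu\circ A(\bu))\,\bal\bigr\|^2 \;-\; 2\,\bal^T\bigl(A(\bu)\circ A(\bu)\bigr)\bal.
\]
The first summand is manifestly nonnegative; the second is sign-variable. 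The entire argument reduces to tracking which summand wins in each regime.

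For large $\lambda$, I would Neumann-expand $M^{-1}=\lambda^{-1}I-\lambda^{-2}\bK_\bmu^{\circ 2}+O(\lambda^{-3})$, so that $\bal=\lambda^{-1}\by+O(\lambda^{-2})$. Substituting, the first summand is $O(\lambda^{-3})$, while the second is $-2\lambda^{-2}\,\by^T(A(\bu)\circ A(\bu))\by+O(\lambda^{-3})$; thus to leading order $\nabla^2 F(\bmu)\sim -2\lambda^{-2}Q$, where $Q_{jk}=\by^T(\bK_j\circ\bK_k)\by$. The concavity claim then reduces to the single lemma $Q\succeq 0$, which I would prove through the identity $\bu^T Q\bu=\Tr\!\bigl((D_\by A(\bu))^2\bigr)$ with $D_\by=\diag(\by)$: absorbing the signature of $D_\by$ into an appropriate diagonal similarity makes $D_\by A(\bu)$ similar to a real symmetric matrix, whose squared trace is nonnegative. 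A uniform bound on the $O(\lambda^{-3})$ remainders on the bounded set $\mathcal{M}$ then upgrades the leading asymptotic to $\nabla^2 F\preceq 0$ for every $\bmu\in\mathcal{M}$ once $\lambda$ is large enough.

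For small $\lambda$, I would exploit the homogeneity $F(\bmu;\lambda)=\lambda^{-1}G(\bmu/\sqrt\lambda)$ with $G(\bm{\nu})=\by^T(\bK_{\bm\nu}^{\circ 2}+I)^{-1}\by$. Since $\nabla^2 F(\bmu;\lambda)=\lambda^{-2}\nabla^2 G(\bmu/\sqrt\lambda)$, convexity of $F$ on $\mathcal{M}$ for $\lambda$ small is equivalent to convexity of $G$ on the dilated region $\mathcal{M}/\sqrt\lambda$, which is pushed to infinity. Inspecting $G$ along a ray $\bm\nu=s\bm{w}$ and diagonalising $\bK_{\bm w}^{\circ 2}$ gives $G(s\bm{w})=\sum_i c_i/(1+s^2 p_i)$, whose spectral summands have second derivatives of the sign of $3s^2 p_i-1$; hence $G$ is radially convex as soon as $s$ is large. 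To promote this radial statement to full positive-semidefiniteness of $\nabla^2 G$ I would combine the Euler identity at $\lambda=0$, $\sum_j\mu_j\bU_j(\bmu)=\bK_\bmu^{\circ 2}$, with a Schur-product / Cauchy--Schwarz bound on $|\bal^T(A(\bu)\circ A(\bu))\bal|$ in terms of the positive quantity $\|M^{-1/2}(\bK_\bmu\circ A(\bu))\bal\|^2$, and then propagate the strict inequality from $\lambda=0$ to small $\lambda$ by continuity.

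The main obstacle is the small-$\lambda$ side. In the large-$\lambda$ regime the two summands live at genuinely different orders of $1/\lambda$ and dominance is essentially a scaling fact; at $\lambda=0$, by contrast, both summands are $O(1)$, so the desired inequality must be extracted from algebraic structure. The Euler identity delivers it immediately in the single direction $\bu=\bmu$ (where the Hessian form evaluates to $6\,\bal^T\by\ge 0$), but the general direction requires a tailored Hadamard-product inequality linking $\bK_\bmu\circ A(\bu)$ to $A(\bu)\circ A(\bu)$. Making that inequality rigorous, uniformly in $\bmu\in\mathcal{M}$ and $\bu\in\mathbb{R}^p$, is the technical heart of the proof.
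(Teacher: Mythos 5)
Your reduction of the large-$\lambda$ case to the single matrix inequality $Q\succeq 0$, with $Q_{jk}=\by^T(\bK_j\circ\bK_k)\by$, is a clean way to see what is going on, and it is in fact sharper than the paper's argument (the paper bounds the two Hessian summands separately by $4C\norm{\bal}^2/\lambda$ and $D\norm{\bal}^2$, with $C=\max\norm{\bK_\bmu\circ\bK_\bu}^2$ and $D=\min_{\norm{\bu}=1}\lambda_\mathrm{min}(\bK_\bu^{\circ 2})$, and needs the stronger proviso $D>0$). But your proof of the key lemma $Q\succeq 0$ is wrong. The identity $\bu^TQ\bu=\Tr\bigl((D_\by A(\bu))^2\bigr)=\sum_{ij}y_iy_j\,A(\bu)_{ij}^2$ is correct, but $D_\by A(\bu)$ is \emph{not} similar to a symmetric matrix when $\by$ has mixed signs: conjugating by $\diag(|y_i|^{1/2})$ leaves you with $S\cdot(\text{symmetric})$, where $S$ is the signature of $\by$, and such a product can have non-real eigenvalues, so the trace of its square can be negative. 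Concretely, take $m=p=2$, $\bK_1=\mathbf{1}\mathbf{1}^T$, $\bK_2=I$ (both legitimate Gram matrices), $\by=(1,-1)^T$, $\bu=(1,-1)^T$; then $A(\bu)$ is the off-diagonal permutation matrix and $\bu^TQ\bu=2y_1y_2=-2<0$. This is not a repairable detail: your own expansion shows the leading Hessian term is $-2\lambda^{-2}Q$, so whenever $Q$ has a negative eigenvalue the function is genuinely not concave for large $\lambda$. The concavity claim therefore only holds under an extra positivity hypothesis (the paper's $D>0$, or your $Q\succeq 0$), which has to be assumed rather than proved.

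The small-$\lambda$ half is not a proof either, and you say so yourself. The scaling identity $F(\bmu;\lambda)=\lambda^{-1}G(\bmu/\sqrt{\lambda})$ and the radial computation are correct and illuminating, but they only give convexity along rays; the step promoting this to $\nabla^2 G\succeq 0$ in all directions --- which you call the technical heart --- is exactly what is missing. The paper closes this half by brute force: it lower-bounds the positive Hessian summand by $4\norm{\bal}^2/\bigl(HE(1+\lambda/E)\bigr)$ via $\lambda_\mathrm{max}\bigl((\bK_\bmu\circ\bK_\bu)^{-1}(\bK_\bmu^{\circ 2}+\lambda I)(\bK_\bmu\circ\bK_\bu)^{-1}\bigr)\le H(\lambda+E)$, upper-bounds the negative summand by $B\norm{\bal}^2$, and compares; this requires $\bK_\bmu\circ\bK_\bu$ to be invertible uniformly ($H<\infty$), a nondegeneracy hypothesis of the same nature as the one your argument lacks. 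If you want to keep the asymptotic route, state these hypotheses explicitly and then either adopt the paper's uniform eigenvalue bounds or actually supply the Hadamard-product inequality you allude to; as written, both halves stop short of a proof.
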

\begin{proof}
First we focus on the concavity condition. We want to show that for $\lambda$ sufficiently large, it holds
\begin{equation*}
4\,\bal^T(\bK_\bmu\circ\bK_\bu) (\bK_\bmu^{\circ 2} + \lambda I)^{-1} (\bK_\bmu\circ\bK_\bu)\, \bal \leq \,\bal^T\bK_\bu^{\circ 2}\,\bal\,,
\end{equation*}
for all $\bu$ s.t. $\norm{\bu}=1$. The LHS can be upper bounded as 
\begin{equation*}
4\,\bal^T(\bK_\bmu\circ\bK_\bu) (\bK_\bmu^{\circ 2} + \lambda I)^{-1} (\bK_\bmu\circ\bK_\bu)\, \bal \leq \frac{4\norm{\bal}^2}{\lambda}C\,,
\end{equation*}
where $C = \max_{\bu \st\norm{\bu} = 1,\,\bmu \in \mathcal{M}}\norm{\bK_\bmu \circ \bK_\bu}^2$, while the RHS can be lower bounded as 
\begin{equation*}
\bal^T\bK_\bu^{\circ 2}\,\bal \geq D\norm{\bal}^2,
\end{equation*}
where $D = \min_{\bu \st \norm{\bu}=1} \lambda_\mathrm{min}(\bK_\bu^{\circ 2})$. If $D>0$, then the function $F$ is concave over $\mathcal{M}$ if 
$$
\lambda \geq \frac{4C}{D}\,.
$$
In a similar fashion we can prove the convexity condition. We want to show that for $\lambda$ sufficiently small, it holds 
\begin{equation*}
4\,\bal^T(\bK_\bmu\circ\bK_\bu) (\bK_\bmu^{\circ 2} + \lambda I)^{-1} (\bK_\bmu\circ\bK_\bu)\, \bal \geq \,\bal^T\bK_\bu^{\circ 2}\,\bal\,,
\end{equation*}
for all $\bu$ s.t. $\norm{\bu}=1$.
The LHS can be lower bounded as 
\begin{equation*}
4\,\bal^T(\bK_\bmu\circ\bK_\bu) (\bK_\bmu^{\circ 2} + \lambda I)^{-1} (\bK_\bmu\circ\bK_\bu)\, \bal \geq \frac{4\norm{\bal}^2}{HE}e^{-\lambda / E}\,,
\end{equation*}
where $E = \max_{\bmu \in\mathcal{M}} \lambda_\mathrm{max}(\bK_\bu^{\circ 2})$ and $H = \max_{\bu \st\norm{\bu} = 1,\,\bmu \in \mathcal{M}}\norm{(\bK_\bmu \circ \bK_\bu)^{-1}}^2$, while the RHS can be upper bounded as 
\begin{equation*}
\bal^T\bK_\bu^{\circ 2}\,\bal \leq B\norm{\bal}^2,
\end{equation*}
where $B = \max_{\bu \st \norm{\bu}=1} \lambda_\mathrm{max}(\bK_\bu^{\circ 2})$. Then the function $F$ is convex over $\mathcal{M}$ if 
$$
\lambda \leq E\log \frac{4}{EHB}\,.
$$
In Appendix \ref{app:proof_} we report a more detailed proof of these bounds.
\end{proof}

\paragraph{Regularized Projection-Based Gradient Descent Algorithm (rPGD2)}

The last algorithm we propose aims to minimize the function (\ref{regularized_loss}) for polynomial combinations of kernels. We could think to define such an algorithm by miming Algorithm \ref{alg:lk_krr_riia}. Unfortunately, in this case, such an algorithm would not be guaranteed to converge. This is due to the fact the function $F$ is not convex anymore, and therefore it is not guaranteed to attain a minimum at a stationary point. Therefore, we just write the Generalized MKL for this setting. The pseudo-code is reported in Algorithm \ref{alg:lk_krr_rpgd2}.

\begin{algorithm}
\caption{Regularized Quadratic PGD}
\label{alg:lk_krr_rpgd2}
\begin{algorithmic}[1]
\STATE Initialize $\bmu_\mathrm{init} \in \mathcal{M}$  
\STATE $\bmu' = \bmu_\mathrm{init}$
\WHILE{$\norm{\bmu'-\bmu}\geq \epsilon$}
    \STATE $\bmu = \bmu'$
    \STATE $\bal = (\bK_\bmu +\lambda I)^{-1}\by$
    \STATE $\mu_k' = \mu_k + 2\,\eta\,(\beta \mu_k-\,\bal^T\bU_k\bal)$, for $k\in[1,p]$
    \STATE $\mu_k' = \max\bra{\mu_k', 0}$, for $k\in[1,p]$
\ENDWHILE
\end{algorithmic}
\end{algorithm}

\subsubsection{Generalization error for kernel learning}

A natural question is why should polynomial combinations of kernels work better than linear.  
For this project, we also tried to give a bound on the generalization bound for learning polynomial combination of kernel. In \cite{cortes2010generalization}, the following theorems was proved for the generalization error of learning linear combinations.
\begin{theorem}
If $\mathcal{H}_l$ is the family of functions
\begin{equation*}
\mathcal{H}_l = \bra*{ \bx \mapsto w^T\Phi_\mathcal{K}(\bx) \st  \mathcal{K} = \sum_{k=1}^p \mu_k \mathcal{K}_k,\, \bmu\geq \bz, \, \norm{\bmu}^2_2=1 }\,,
\end{equation*}
and $R>0$ is such that $\mathcal{K}_k(x,x)\leq R^2$ for all $x\in\mathbb{R}^n$ and $k\in[1,p]$, then the Rademacher complexity of $\mathcal{H}_l$ (for any sample set $S$ of size $m$) can be bounded as 
\begin{equation*}
\hat{\mathcal{R}}_S(\mathcal{H}_l) \leq \frac{\eta_0 \, p^{1/4} \, R}{\sqrt{m}},  
\end{equation*}
where $\eta_0 = \sqrt{23/11}$ is a constant. In particular this implies the following generalization bound, for a fixed $\rho>0$ and $\delta\in(0,1)$, with probability at least $1-\delta$:
\begin{equation*}
R(h) \leq \hat{R}_\rho(h) + \frac{\eta_1 \, p^{1/4} \, R}{\rho\sqrt{m}} + 3\sqrt{\frac{\log\frac{2}{\delta}}{2m}},
\end{equation*}
where $\eta_1 = \sqrt{46/11}$ is a constant.
\end{theorem}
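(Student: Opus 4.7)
The plan is to bound the empirical Rademacher complexity directly from its definition and then feed the bound into the standard margin-based generalization inequality. First I would expand
$$
\hat{\mathcal{R}}_S(\mathcal{H}_l) = \frac{1}{m}\,\mathbb{E}_\sigma\!\left[\sup_{h\in\mathcal{H}_l}\sum_{i=1}^m \sigma_i h(x_i)\right],
$$
and use the reproducing property so that, for fixed $\mathcal{K}$, the inner supremum over $w$ in the unit ball of $\mathcal{H}_\mathcal{K}$ equals $\sqrt{\sigma^T \bK \sigma}$, where $\bK = \sum_{k=1}^p \mu_k \bK_k$ is the Gram matrix on the sample. This reduces the problem to bounding $\mathbb{E}_\sigma[\sup_{\bmu\ge \bz,\,\|\bmu\|_2=1}\sqrt{\sum_k \mu_k v_k(\sigma)}]$, where $v_k(\sigma) \doteq \sigma^T \bK_k \sigma \geq 0$.

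The second step exploits the specific $\ell_2$ constraint on $\bmu$: since all $v_k$ are non-negative, Cauchy--Schwarz is tight and the inner supremum equals $\|\bv(\sigma)\|_2^{1/2}$, with the maximizer $\mu_k \propto v_k(\sigma)$. Two applications of Jensen's inequality then give
$$
\hat{\mathcal{R}}_S(\mathcal{H}_l) \;\leq\; \frac{1}{m}\Bigl(\mathbb{E}_\sigma[\|\bv(\sigma)\|_2^2]\Bigr)^{1/4}
= \frac{1}{m}\Bigl(\sum_{k=1}^p \mathbb{E}_\sigma\!\big[(\sigma^T \bK_k \sigma)^2\big]\Bigr)^{1/4}.
$$
This is where the $p^{1/4}$ factor arises: pulling the $\ell_2$ norm out of the square root costs only a fourth-root of the number of kernels, rather than the $\sqrt{p}$ one would pay for an $\ell_1$ constraint.

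The third step is the fourth-moment calculation for the Rademacher quadratic form. Writing $\mathbb{E}_\sigma[\sigma_i\sigma_j\sigma_k\sigma_l]$ as the indicator that the indices pair up, one gets the identity
$$
\mathbb{E}_\sigma\!\big[(\sigma^T \bK \sigma)^2\big] = \Trace(\bK)^2 + 2\|\bK\|_F^2 - 2\sum_i K_{ii}^2,
$$
and then for PSD $\bK$ with $K_{ii}\leq R^2$ one bounds each term in terms of $m R^2$ (using $\|\bK\|_F \leq \Trace(\bK) \leq mR^2$). Plugging this into the previous display yields $\hat{\mathcal{R}}_S(\mathcal{H}_l) \leq c\, p^{1/4} R/\sqrt{m}$ for an absolute constant $c$; I expect that squeezing $c$ down to the specific value $\eta_0 = \sqrt{23/11}$ will be the main obstacle, as it requires keeping the negative $-2\sum K_{ii}^2$ correction and exploiting $K_{ii}\leq R^2$ to absorb part of $\|\bK\|_F^2$ rather than discarding it by $\|\bK\|_F\leq \Trace(\bK)$.

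Finally, the generalization statement follows from the standard Bartlett--Mendelson/Koltchinskii--Panchenko bound: for $\rho$-margin losses and a class with range in $[-1,1]$, with probability at least $1-\delta$,
$$
R(h) \leq \hat{R}_\rho(h) + \frac{2}{\rho}\,\hat{\mathcal{R}}_S(\mathcal{H}_l) + 3\sqrt{\frac{\log(2/\delta)}{2m}},
$$
where the factor $3$ (rather than $1$) arises from using McDiarmid's inequality twice to replace the expected Rademacher complexity by its empirical counterpart inside the probability statement. Substituting the bound on $\hat{\mathcal{R}}_S(\mathcal{H}_l)$ with $\eta_1 = 2\eta_0 = \sqrt{46/11}$ yields the claimed inequality.
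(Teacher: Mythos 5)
The paper does not actually prove this theorem: it is quoted from \cite{cortes2010generalization} and used only as a black box to obtain, by embedding, the weaker bound for polynomial combinations. So there is no in-paper proof to compare you against; the assessment below measures your proposal against the statement itself and the cited source.

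Your outline is essentially the argument of \cite{cortes2010generalization}, and it is sound. The reduction $\sup_{\norm{w}\leq 1}\sum_i\sigma_i w^T\Phi_{\mathcal{K}}(x_i)=\sqrt{\sigma^T\bK\sigma}$ (note that the class as written omits the constraint $\norm{w}\leq 1$; you correctly read it in, and without it the complexity would be infinite), the observation that nonnegativity of the $v_k$ makes Cauchy--Schwarz tight under the constraint $\bmu\geq\bz$, $\norm{\bmu}_2=1$, the Jensen steps, and the fourth-moment identity
\begin{equation*}
\expect_\sigma\big[(\sigma^T\bK\sigma)^2\big]=\Tr(\bK)^2+2\norm{\bK}_{\mathrm{F}}^2-2\sum_i K_{ii}^2
\end{equation*}
are all correct. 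One remark: the step you flag as the main obstacle, namely recovering the constant $\eta_0$, is not actually an obstacle. For PSD $\bK$ one has $\norm{\bK}_{\mathrm{F}}^2\leq\Tr(\bK)^2$, so the identity already gives $\expect_\sigma[(\sigma^T\bK_k\sigma)^2]\leq 3\,\Tr(\bK_k)^2\leq 3m^2R^4$ and hence $\hat{\mathcal{R}}_S(\mathcal{H}_l)\leq 3^{1/4}p^{1/4}R/\sqrt{m}$, with $3^{1/4}\approx 1.32<\sqrt{23/11}\approx 1.45$; the crude bound beats the stated constant without keeping the negative correction term. The one genuine slip is arithmetic: $2\eta_0=2\sqrt{23/11}=\sqrt{92/11}\neq\sqrt{46/11}$, so the factor-$2/\rho$ margin bound does not literally produce the stated $\eta_1$ from the stated $\eta_0$. (In the source the constant is $\eta_0=\sqrt{23/22}$, for which $2\eta_0=\sqrt{46/11}$ does hold; the value of $\eta_0$ in the theorem as transcribed here appears to be a typo, and your relation $\eta_1=2\eta_0$ is the intended one.)
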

We would be then interested in giving a similar bound for the family of polynomial combinations of kernels that we considered:
\begin{equation*}
\mathcal{H}_p = \bra*{ \bx \mapsto w^T\Phi_\mathcal{K}(\bx) \st  \mathcal{K} = \parr[\Big]{ \sum_{k=1}^p \mu_k \mathcal{K}_k }^2,\, \bmu\geq \bz, \, \norm{\bmu}^2_2=1 }\,.
\end{equation*}
A bound for the Rademacher complexity can be obtained (with the previous hypothesis), by noticing that we can embed $\mathcal{H}_p$ in the bigger family:
\begin{equation*}
\mathcal{H}_p\subset \mathcal{H}_p^+  \doteq \bra*{ \bx \mapsto w^T\Phi_\mathcal{K}(\bx) \st  \mathcal{K} = \sum_{j,k=1}^p \mu_{jk} \mathcal{K}_j\mathcal{K}_k ,\, \bmu\geq \bz, \, \norm{\bmu}^2_2=1 }\,.
\end{equation*}
Since $\mathcal{K}_j(x,x)\mathcal{K}_k(x,x) \leq R^4$, the following bound holds:
\begin{align*}
\hat{\mathcal{R}}_S(\mathcal{H}_l) & \leq \frac{\eta_0 \, p^{1/2} \, R^2}{\sqrt{m}}, \\
R(h) & \leq \hat{R}_\rho(h) + \frac{\eta_1 \, p^{1/2} \, R^2}{\rho\sqrt{m}} + 3\sqrt{\frac{\log\frac{2}{\delta}}{2m}},
\end{align*}
where the last one holds for every $h\in\mathcal{H}_p$, with probability at least $1-\delta$.
Unfortunately, such bounds are worse then the ones for linear combinations of kernel. We tried to 
provide better bounds, but none of the techniques we tried reached bounds as good as the above.

\subsection{Empirical Results}

\subsubsection{Data}

To empirically test our algorithms we considered several different datasets. These datasets were obtained from the UCI Machine Learning Repository. A brief description of some of the datasets we used is reported below.

\begin{itemize}
\item {\bf Breast Cancer Data Set.} This breast cancer domain was obtained from the University Medical Centre,
   Institute of Oncology, Ljubljana, Yugoslavia. This data set includes 201 instances of one class and 85 instances of
     another class.  The instances are described by 9 attributes, some of
     which are linear and some are nominal.
     
\item {\bf Diabetes Data Set.} This data set contains the distribution for 70 sets of data recorded
on diabetes patients (several weeks' to months' worth of glucose, insulin,
and lifestyle data per patient + a description of the problem domain).

\item {\bf Ionosphere Data Set.} This radar data was collected by a system in Goose Bay, Labrador.  This
   system consists of a phased array of 16 high-frequency antennas with a
   total transmitted power on the order of 6.4 kilowatts.  See the paper
   for more details.  The targets were free electrons in the ionosphere.
   "Good" radar returns are those showing evidence of some type of structure 
   in the ionosphere.  "Bad" returns are those that do not; their signals pass
   through the ionosphere.  

   Received signals were processed using an autocorrelation function whose
   arguments are the time of a pulse and the pulse number.  There were 17
   pulse numbers for the Goose Bay system.  Instances in this databse are
   described by 2 attributes per pulse number, corresponding to the complex
   values returned by the function resulting from the complex electromagnetic
   signal. 
   
It has 351 instances and 34 attributes.

\item {\bf Heart Disease Data Set.} This database contains 76 attributes, but all published experiments
     refer to using a subset of 14 of them.  In particular, the Cleveland
     database is the only one that has been used by ML researchers to 
     this date.  The "goal" field refers to the presence of heart disease
     in the patient.  It is integer valued from 0 (no presence) to 4.
     Experiments with the Cleveland database have concentrated on simply
     attempting to distinguish presence (values 1,2,3,4) from absence (value
     0). 
     
\item {\bf Connectionist Bench (Sonar, Mines vs. Rocks) Data Set.} This is the data set used by Gorman and Sejnowski in their study
of the classification of sonar signals using a neural network.  The
task is to train a network to discriminate between sonar signals bounced
off a metal cylinder and those bounced off a roughly cylindrical rock. 

Each pattern is a set of 60 numbers in the range 0.0 to 1.0.  Each number
represents the energy within a particular frequency band, integrated over
a certain period of time.  The integration aperture for higher frequencies
occur later in time, since these frequencies are transmitted later during
the chirp.


\end{itemize}


\subsubsection{Our implementation}

We tried to implement the above algorithm in \texttt{sklearn}. All the code we wrote is contained in the attached \texttt{.zip} file. We wrote a class \texttt{problem} where a problem is defined for a dataset together with a kernel learning algorithm and all the required parameters. In the tables below we reported some results for the \texttt{Ionosphere} ad \texttt{Sonar} datasets. First we run 10-Fold Cross Validation to select the best parameters for each method. The test error
reported was based on 30 random 50/50 splits of the entire dataset into training and test sets. The types of error reported are the square root of the mean square error (MSE) and the misclassification fraction (MSF). The labels were recovered from the regression output by simply applying the sign function. The parameter $\eta$ was chosen as $1$ for PGD-type algorithms (being reduced by a $0.8$ factor if the error increased) and as $1/2$ for IIA-type algorithms. The number of the iterations for each number were between $10$ and $50$ depending on the parameters and the algorithm. We also compared the algorithm with a benchmark model (BM) and with a uniform combinations of the kernels (UNIF). In the following $d$ denotes the degree of the combinations learned. 

\setlength\tabcolsep{1.5pt} 
\begin{table}[p]
\centering
{\renewcommand\arraystretch{1.2}
\begin{tabular}{|l|c|c|c|c|}
\hline
 & PGD & IIA & rPGD & rIIA \\
\hline
$d = 1$ & $0.69\pm 0.10$ & $0.69\pm 0.10$ & $0.70 \pm 0.11$ & $0.70 \pm 0.11$ \\
\hline
$d = 2$ & $0.75\pm 0.13$ & $-$ & $0.76 \pm 0.15$ & $-$ \\
\hline
\end{tabular}}
\caption{Cross validation results for regression on dataset \texttt{Ionosphere}.}
\end{table}
\begin{table}[p]
\centering
{\renewcommand\arraystretch{1.2}
\begin{tabular}{|l|c|c|c|c|c|c|}
\hline
 & IIA & PGD $(d=2)$ & rIIA & rPGD $(d=2)$ & BM & UNIF \\
\hline
MSE & $0.82\pm 0.03$ & $0.86\pm 0.02$ & $0.83 \pm 0.02$ & $0.94 \pm 0.01$ & $0.82 \pm 0.03$ & $1.11 \pm 0.16$ \\
\hline
MSF & $0.23\pm 0.03$ & $0.24\pm 0.04$ & $0.24 \pm 0.02$ & $0.29 \pm 0.05$ & $0.23 \pm 0.04$ & $0.29 \pm 0.06$ \\
\hline
\end{tabular}}
\caption{Mean of $30$ ($50/50$) test results for regression on dataset \texttt{Ionosphere}.}
\end{table}


\begin{table}[p]
\centering
{\renewcommand\arraystretch{1.2}
\begin{tabular}{|l|c|c|c|c|}
\hline
 & PGD & IIA & rPGD & rIIA \\
\hline
$d = 1$ & $0.80\pm 0.15$ & $0.80\pm 0.15$ & $0.82 \pm 0.18$ & $0.82 \pm 0.18$ \\
\hline
$d = 2$ & $0.80\pm 0.11$ & $-$ & $0.81 \pm 0.13$ & $0.81 \pm 0.13$ \\
\hline
\end{tabular}}
\caption{Cross validation results for regression on dataset \texttt{Sonar}.}
\end{table}
\begin{table}[p]
\centering
{\renewcommand\arraystretch{1.2}
\begin{tabular}{|l|c|c|c|c|c|c|}
\hline
 & IIA & PGD $(d=2)$ & rIIA & rPGD $(d=2)$ & BM & UNIF \\
\hline
MSE & $0.82\pm 0.03$ & $0.86\pm 0.04$ & $0.83 \pm 0.02$ & $0.86 \pm 0.04$ & $1.11 \pm 0.16$ & $1.69 \pm 0.02$ \\
\hline
MSF & $0.22\pm 0.03$ & $0.26\pm 0.03$ & $0.23 \pm 0.03$ & $0.25 \pm 0.03$ & $0.22 \pm 0.03$ & $0.25 \pm 0.03$ \\
\hline
\end{tabular}}
\caption{Mean of $30$ ($50/50$) test results for regression on dataset \texttt{Sonar}.}
\end{table}

Both datasets and CV and test errors show the same trend. First of all we notice that for a given degree of the combination the performances of different algorithms are almost the same, even if the rPGD method seems to be more unstable for $d > 1$. The linear combination learned is much better then the uniform combination but performs as well as the benchmark. Instead, polynomial combinations seem to perform worse than linear ones. In the following we reported the plots of the (CV and test) error as a function of the regularization parameter $\lambda$ for different datasets to try to understand why this is the case. The plots are for different values of the degree $d$. 



\begin{figure}[p]
\includegraphics[width=\textwidth]{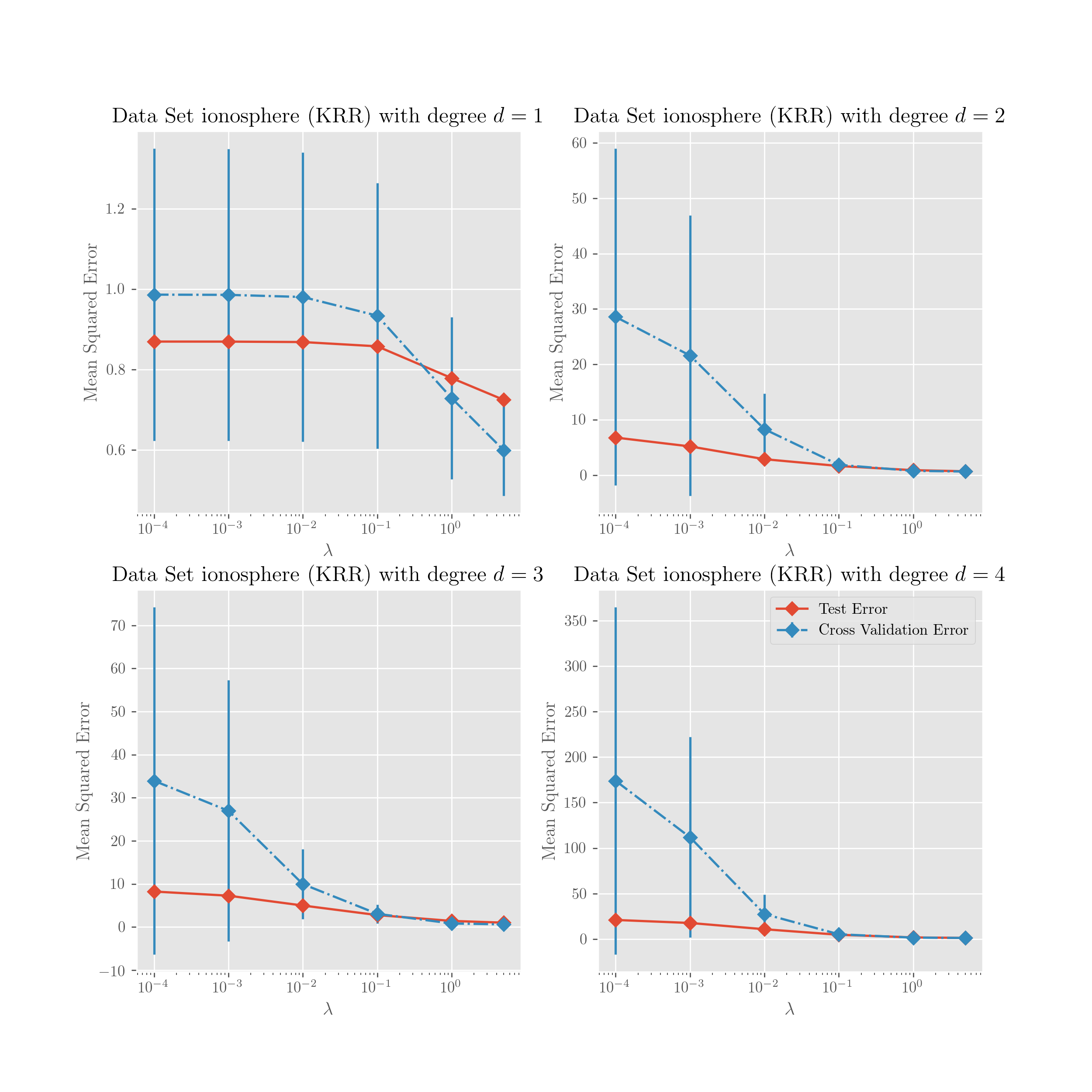}
\caption{PGD results for regression on dataset \texttt{Ionosphere}.}
\end{figure}


\begin{figure}[p]
\includegraphics[width=\textwidth]{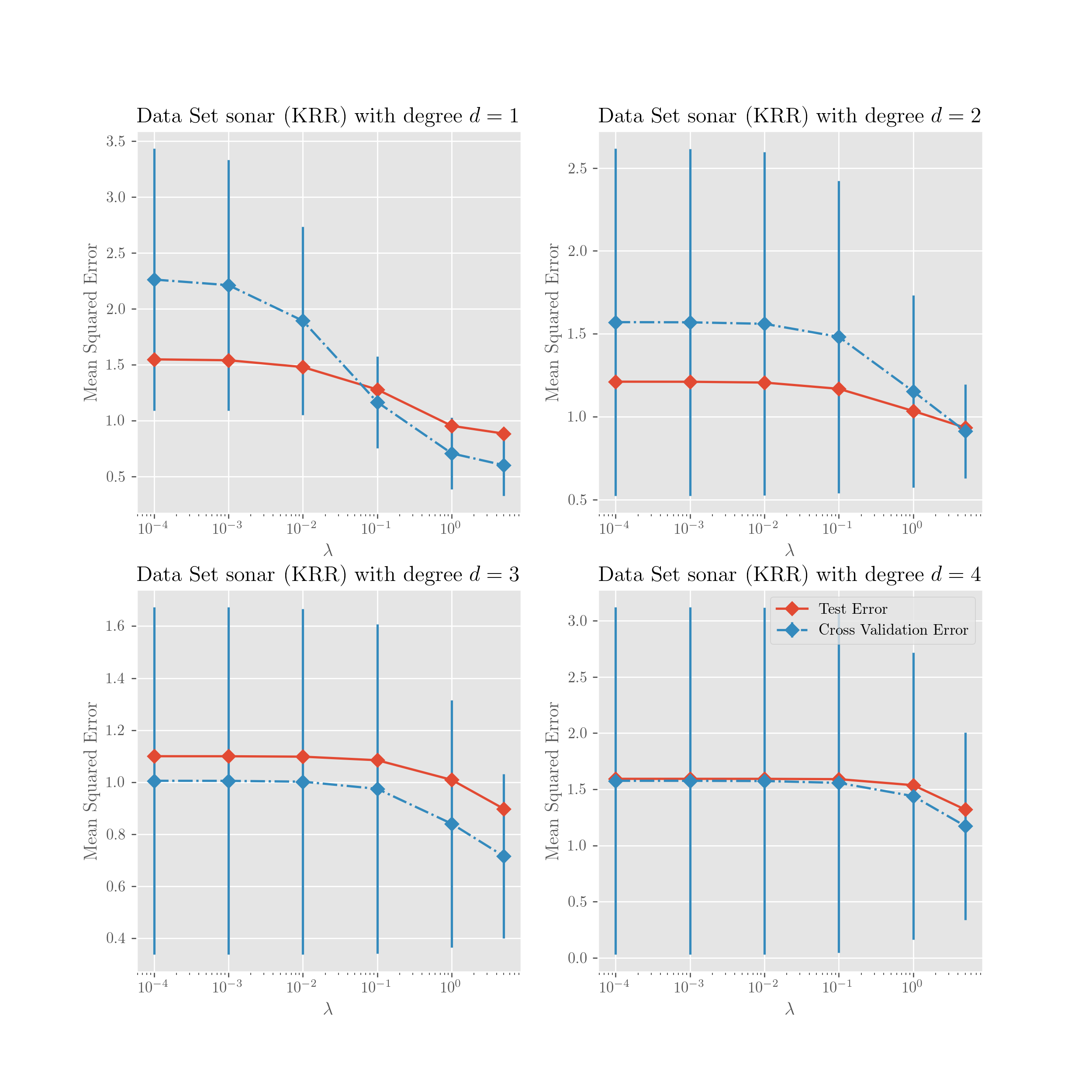}
\caption{PGD results for regression on dataset \texttt{Sonar}.}
\end{figure}


In the above plots the trend is almost the same. In both the linear and the polynomial case, the error decreases as $\lambda$ increases from $10^{-4}$ to $10$. Although the error is generally higher for $d > 1$. Notice that the parameter $\lambda$ is the same in both the KKR model used to fit the data and the learning kernel algorithm. For $d>1$, the PGD algorithm we considered is guaranteed to converge to a global minima only for small values of $\lambda$. Although for such values of the regularization parameter the KRR could furnish a poor model to fit our data. There is therefore a clear tradeoff between the two problems in the choice of $\lambda$. 

Moreover, in our tests, the algorithm showed to be extremely susceptible to the choice of other parameters such as $\bmu_0$ and $\bmu_\mathrm{init}$. In our test we set them as $\mathbf{1}$. Although other tests showed that the results were quite different for different choices. The cross validation we run was done to select the parameter $\Gamma$ in the range $[0.001,0.1,0.5,1.,2.,10.]$. We believe that a more accurate selection of the parameters could bring better results, more aligned with the ones of paper \cite{cortes2009learning}. Although, this shows that such algorithms are quite susceptible to the choice of `bad' parameters.

\newpage
\section{Classification Tasks}

We are now interested in the problem of classification. In particular, we consider the case when the labels are values in $\bra{-1,1}$. The datasets we considered before actually fall in this category. One possible approach (which we used to evaluate the misclassification rate on the test set) is to run a regression algorithm (KRR in our case) and then map the output to a corresponding label by simply using the function $x\mapsto \mathrm{sign}(x)$. In particular we are interested in the problem of learning the kernel. Another algorithm which is kernealizable it the Support Vector Machine (SVM) method. The SVM algorithm (with 2-norm soft margin, see \cite{lanckriet2004learning}) solves the optimization problem:
\begin{align*}
\min_{w,b,\xi} & \quad \parq*{ \norm{w}^2 + C\sum_{i=1}^m \xi_i^2 } \\
\text{subject to} & \quad y_i(\prodscal{w, \Phi(x_i)} + b) \geq 1 - \xi_i, \quad \text{for }i=1,\dots,n, \\ 
 & \quad \xi_i\geq 0, \quad \text{for }i=1,\dots,n \,.
\end{align*}
The same problem can be formulated in a dual kernealized version:
\begin{align}
\max_\bal & \quad \parq*{ \, 2\prodscal{\bal,\mathbf{1}} - \bal^T \parr*{ G(\bK) + \lambda I } \bal \,} \label{svm_dual} \\
\text{subject to} & \quad \bal \geq \bz,\; \prodscal{\bal, \by} = 0 \, , \notag
\end{align}
where $G(\bK) = \bY \bK \bY $, for $\bY = \mathrm{diag}(\by)$, and $\lambda = 1/C$. Notice that in this case we do not have a \emph{closed formula} for the solution $\bal$ of the dual optimization problem. Although, it is still possible to formulate a Generalized MKL algorithm (see \cite{varma2009more}). Such algorithm is reported in Algorithm \ref{alg:lk_svm_general}. Notice that, because of the absence of a closed formula, we need to solve an $SVM$ problem at each iteration, which could become very costly if the number of iterations gets large. Instead of doing this, we are interested in understanding if the kernels learned for the KKR optimization problem could fit well for SVM. We first show how the two optimization problems (\ref{krr_dual}) and (\ref{svm_dual}) can be somehow related and we then discuss some empirical results.  
Problem (\ref{svm_dual}) can be re-written as (taking $\bv = \bY \bal $ and by noticing that $\bY ^2 = I$):
\begin{align*}
\max_\bal & \quad \parq*{ \, 2\prodscal{\bv,\by} - \bv^T \parr*{ \bK + \lambda I } \bv \,} \label{svm_dual} \\
\text{subject to} & \quad \bY\bv \geq \bz,\; \prodscal{\bv, \mathbf{1}} = 0    
\end{align*}
It's therefore clear that (\ref{krr_dual}) is a relaxation of the above and therefore we can upper bound the quantity above with $\by^T (\bK + \lambda I)^{-1} \by$. Therefore solving the kernel learning problem for KRR is the same as solving the kernel learning problem on a upper bound of (\ref{svm_dual}). A natural question is how strict/loose this bound is. This is clearly related on how good a kernel, learned for KRR, could perform for SVM.  We explore this question empirically in the next section.

\begin{algorithm}
\caption{SVM Generalized MKL}
\label{alg:lk_svm_general}
\begin{algorithmic}[1]
\STATE Initialize $\bmu_\mathrm{init} \in \Theta$  
\STATE $\bmu' = \bmu_\mathrm{init}$
\WHILE{$\norm{\bmu'-\bmu}\geq \epsilon$}
    \STATE $\bmu = \bmu'$
    \STATE Solve problem (\ref{svm_dual}) with $G = G(\bK_\bmu)$ to get a new $\bal$
    \STATE $\bH_k = \frac{d}{d\mu_k}\bK_\bmu$, for $k\in[1,p]$
    \STATE $\mu_k' = \mu_k - \eta\parq*{\frac{d}{d\mu_k}r(\bmu)- \,\bal^T\bH_k\bal}$, for $k\in[1,p]$
    \STATE Project $\bmu'$ on $\Theta$
\ENDWHILE
\end{algorithmic}
\end{algorithm}

\subsection{Empirical results}

The following figures show the performances of learned kernels from PGD and IIA with different $\lambda$ on SVM given the best parameters after grid search on the datasets \texttt{Ionosphere} and \texttt{Sonar}. The plots are for different degrees $d$ of the polynomial combinations used. More plots are reported in the Appendix.

\begin{figure}[p]
\includegraphics[width=\textwidth]{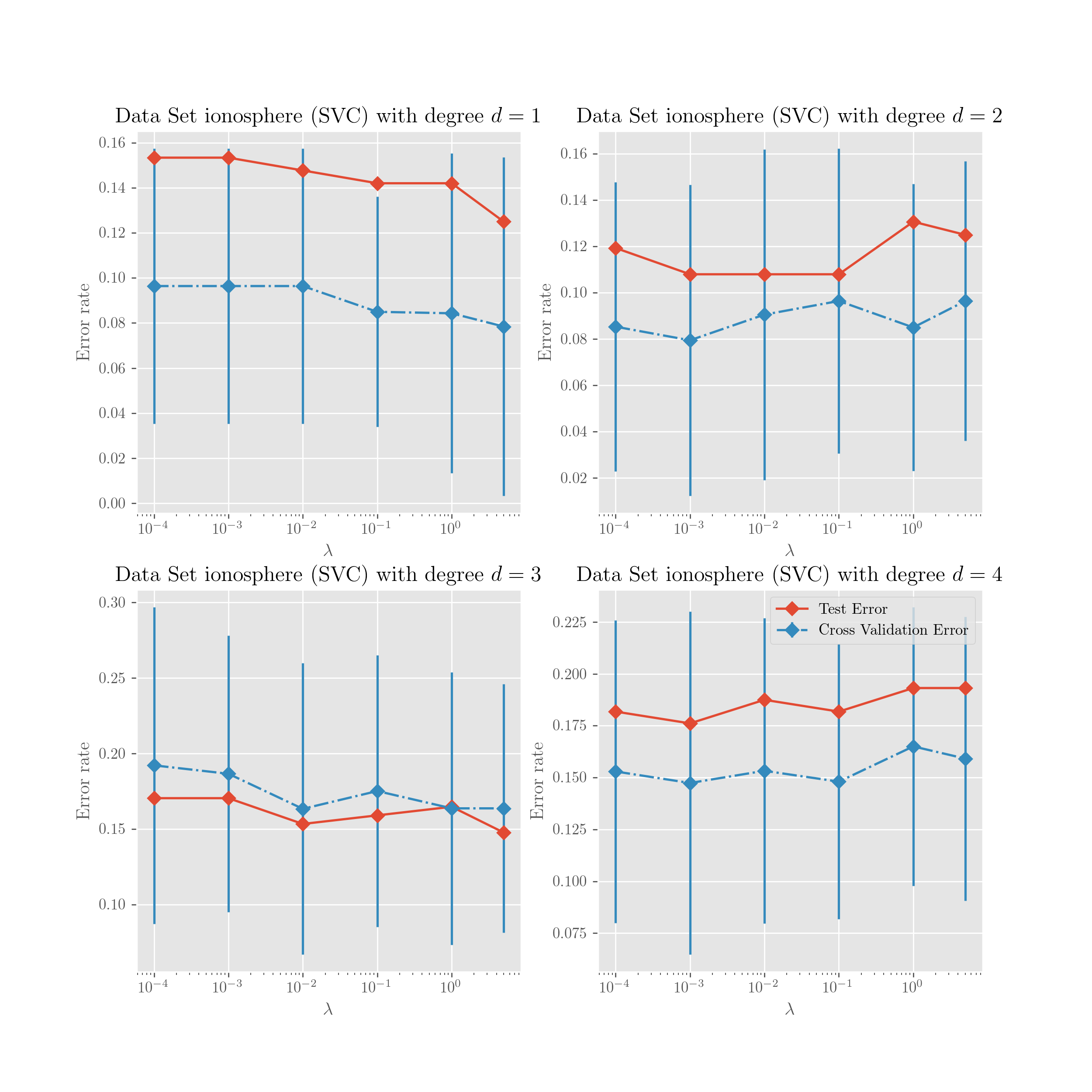}
\caption{PGD results for classification on dataset \texttt{Ionosphere}.}
\end{figure}


\begin{figure}[p]
\includegraphics[width=\textwidth]{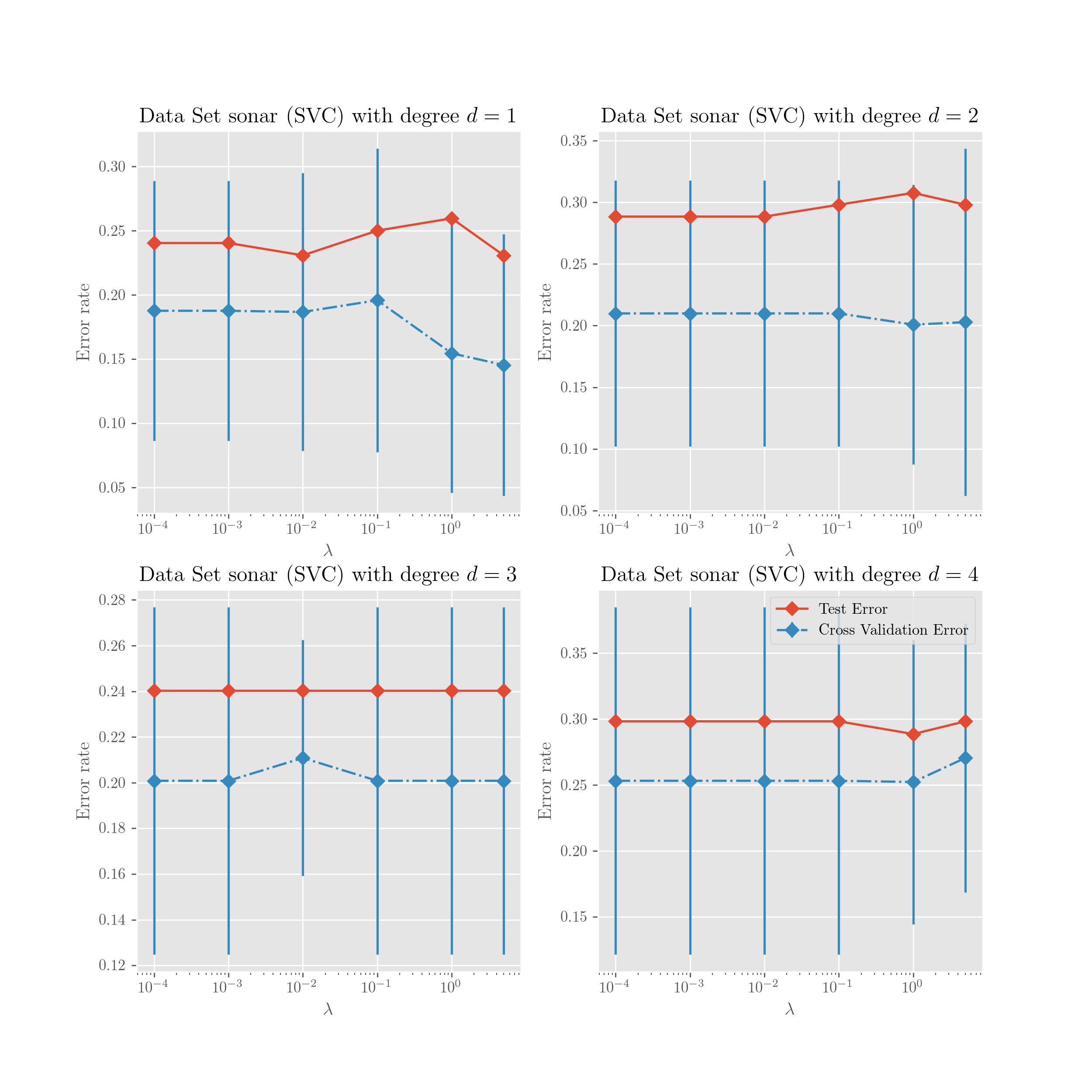}
\caption{PGD results for classification on dataset \texttt{Sonar}.}
\end{figure}


The results obtained confirm the ones obtained for KRR. Generally speaking, higher degree polynomial combinations are more difficult to learn. Nevertheless we can observe that the test errors obtained look a bit better than the ones obtained with KRR. This may be because we are using a model which is designed for classification tasks (rather than regression). Moreover for some dataset some polynomial combinations actually show an improvement on the performances. The trade-off effect we discussed in the previous section is therefore less present in this situation.
To summarize, this shows that this method could be of practical interest and it deserves further research. Final considerations are reported in the conclusion.

\newpage
\section{Conclusion} 

We considered the problem of learning the kernel for Kernel Ridge Regression. We started by reporting the general setup for this problem (as in \cite{varma2009more}) and we focused on the case of linear and polynomial combinations. In particular we considered two algorithms already proposed in \cite{cortes20092} and \cite{cortes2009learning} respectively. For the second one, we proved a more readable condition under which the gradient descent algorithm for the associated optimization problem is guaranteed to converge to a global minima. We also presented (in the appendix) ideas from manifold optimization which could perhaps be a good substitute to the PGD algorithm.

Then we considered an algorithm that we derived by considering a very close related optimization problem. The algorithm is derived in an iterative interpolation fashion for the linear case and in a projected gradient descent fashion for the polynomial case, following the idea of the papers cited above.
We then implemented these algorithms in \texttt{sklearn} and we run some empirical tests. Unfortunately the results we obtained are not so promising as the ones obtained in the papers \cite{cortes20092}, \cite{cortes2009learning}. We believe that this is for two main reasons:
\begin{itemize}
\item The selection of the parameters is not easy 
\item There is a trade off between the convergence properties of PGD and the ability of fitting the data of the associated KRR model
\end{itemize}
We believe that with a better selection of parameters we can achieve the same results as in the papers cited above. 
We also tried to look at the generalization bound for these family of kernels. We started from the bound proved in \cite{cortes2010generalization} for the linear case and we tried to get a bound for the polynomial case in a similar fashion. Unfortunately, we were just able to show that such error is larger for polynomial combinations (even if it is of the same order in the number of samples). The bound is obtained by simply embedding the family of polynomial combinations in a larger linear family.
The last thing we looked at is how such learned kernels can perform for SVM. After showing how the optimization problem KRR is a relaxation of the SVM one, we run some empirical tests. The results we obtained are comparable with the ones for the KRR case. 

\nocite{*}
\newpage
\bibliography{reference}

\newcommand{\etalchar}[1]{$^{#1}$}
\begin{thebibliography}{CMR09b}

\bibitem[AMS09]{absil2009optimization}
P-A Absil, Robert Mahony, and Rodolphe Sepulchre.
\newblock {\em Optimization algorithms on matrix manifolds}.
\newblock Princeton University Press, 2009.

\bibitem[CMR09a]{cortes20092}
Corinna Cortes, Mehryar Mohri, and Afshin Rostamizadeh.
\newblock L 2 regularization for learning kernels.
\newblock In {\em Proceedings of the Twenty-Fifth Conference on Uncertainty in
  Artificial Intelligence}, pages 109--116. AUAI Press, 2009.

\bibitem[CMR09b]{cortes2009learning}
Corinna Cortes, Mehryar Mohri, and Afshin Rostamizadeh.
\newblock Learning non-linear combinations of kernels.
\newblock In {\em Advances in neural information processing systems}, pages
  396--404, 2009.

\bibitem[CMR10]{cortes2010generalization}
Corinna Cortes, Mehryar Mohri, and Afshin Rostamizadeh.
\newblock Generalization bounds for learning kernels.
\newblock In {\em Proceedings of the 27th International Conference on Machine
  Learning (ICML-10)}, pages 247--254, 2010.

\bibitem[LCB{\etalchar{+}}04]{lanckriet2004learning}
Gert~RG Lanckriet, Nello Cristianini, Peter Bartlett, Laurent~El Ghaoui, and
  Michael~I Jordan.
\newblock Learning the kernel matrix with semidefinite programming.
\newblock {\em Journal of Machine learning research}, 5(Jan):27--72, 2004.

\bibitem[VB09]{varma2009more}
Manik Varma and Bodla~Rakesh Babu.
\newblock More generality in efficient multiple kernel learning.
\newblock In {\em Proceedings of the 26th Annual International Conference on
  Machine Learning}, pages 1065--1072. ACM, 2009.

\end{thebibliography}
\bibliographystyle{alpha}

\appendix

\section{Appendix}

\subsection{More figures}

\begin{figure}[H]
\includegraphics[width=\textwidth]{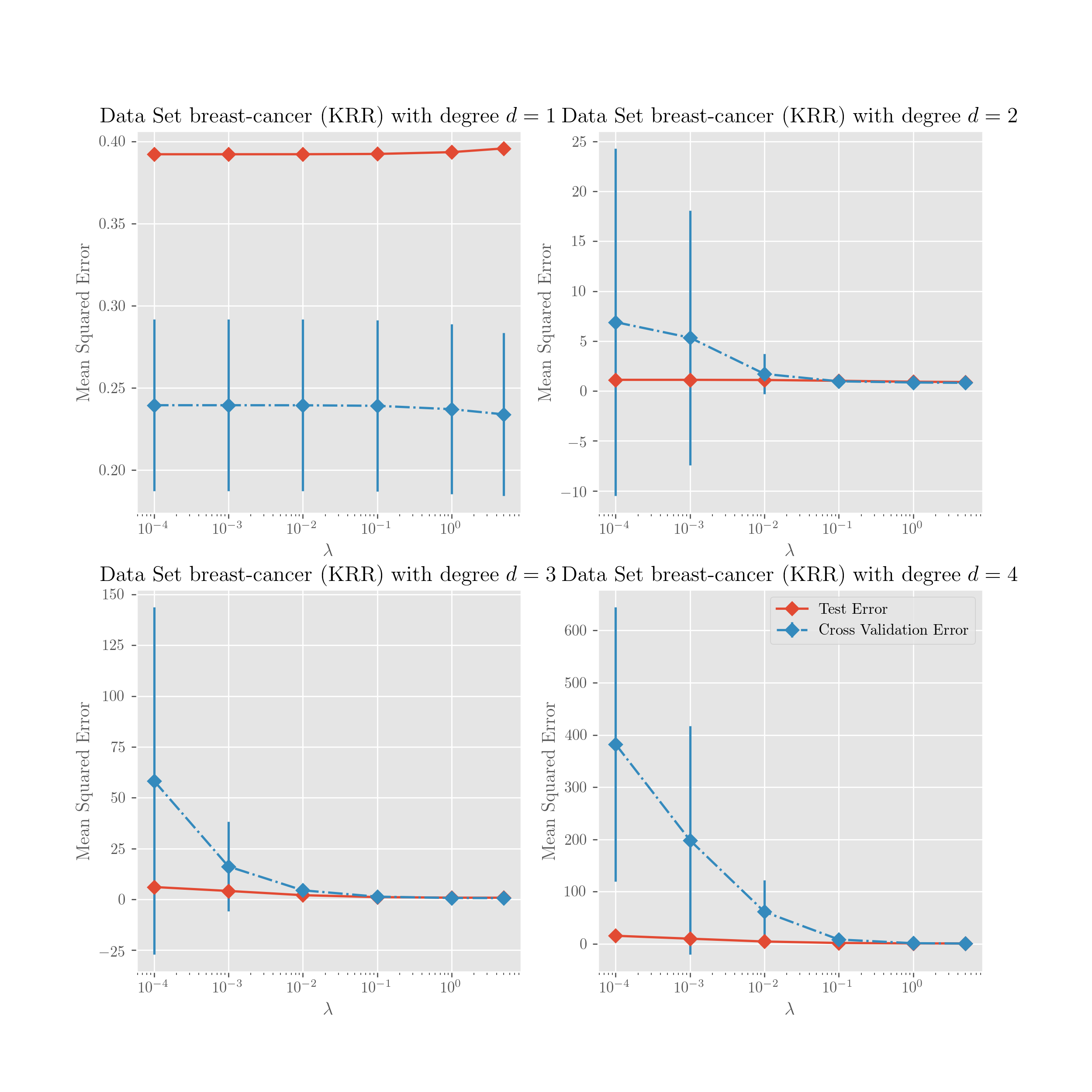}
\caption{PGD results for regression on dataset \texttt{Breast Cancer}.}
\end{figure}


\begin{figure}[H]
\includegraphics[width=\textwidth]{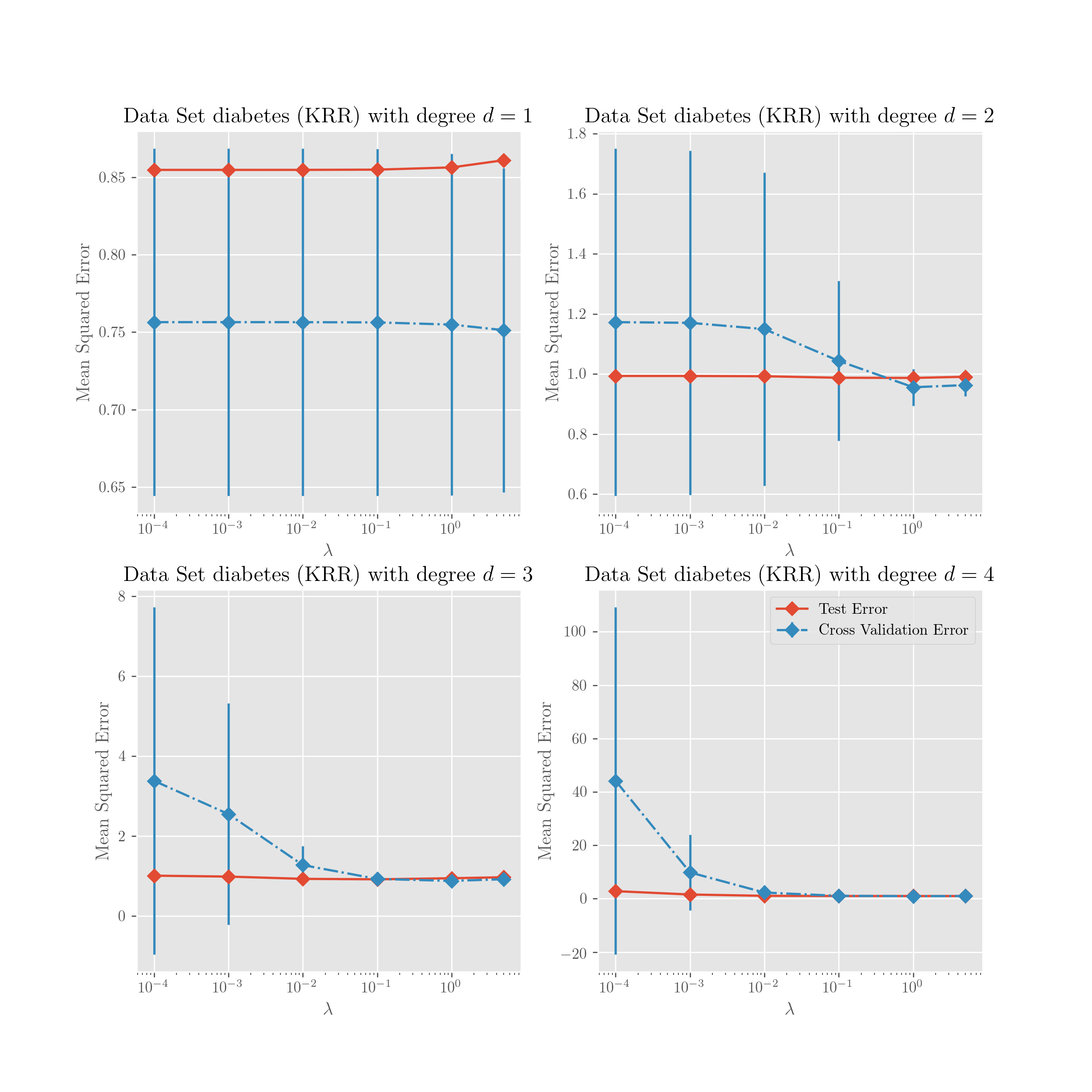}
\caption{PGD results for regression on dataset \texttt{Diabetes}.}
\end{figure}


\begin{figure}[H]
\includegraphics[width=\textwidth]{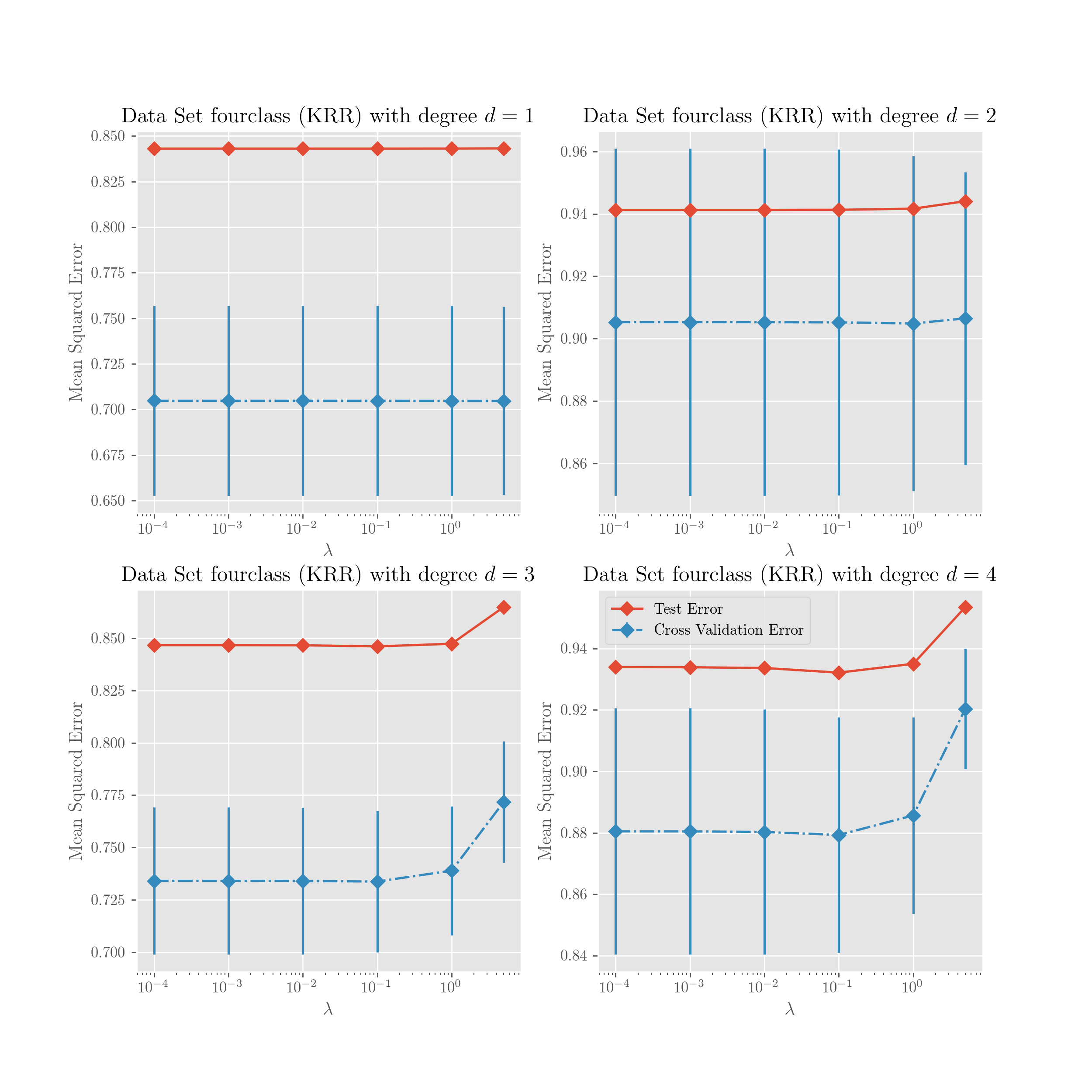}
\caption{PGD results for regression on dataset \texttt{Fourclass}.}
\end{figure}


\begin{figure}[H]
\includegraphics[width=\textwidth]{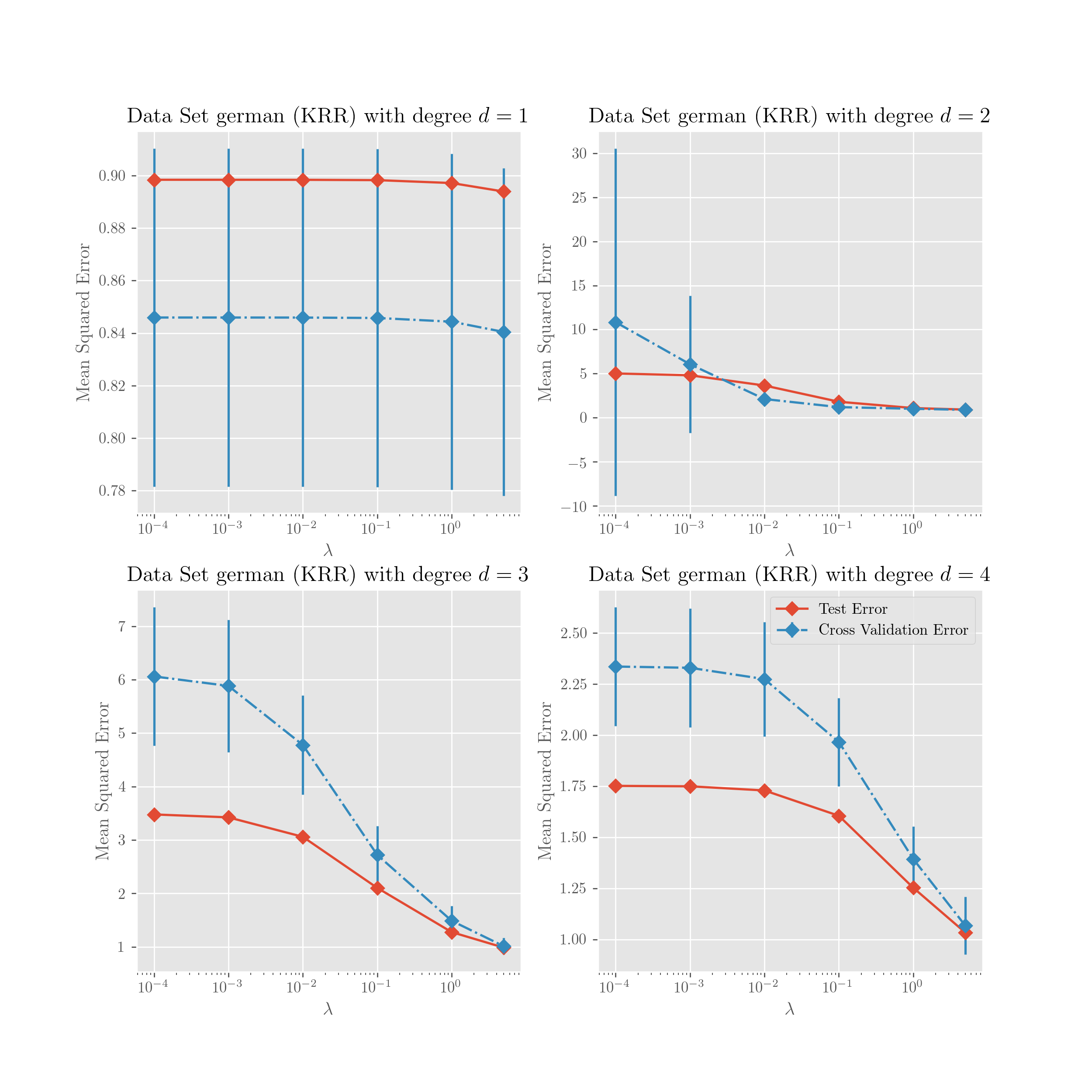}
\caption{PGD results for regression on dataset \texttt{German}.}
\end{figure}


\begin{figure}[H]
\includegraphics[width=\textwidth]{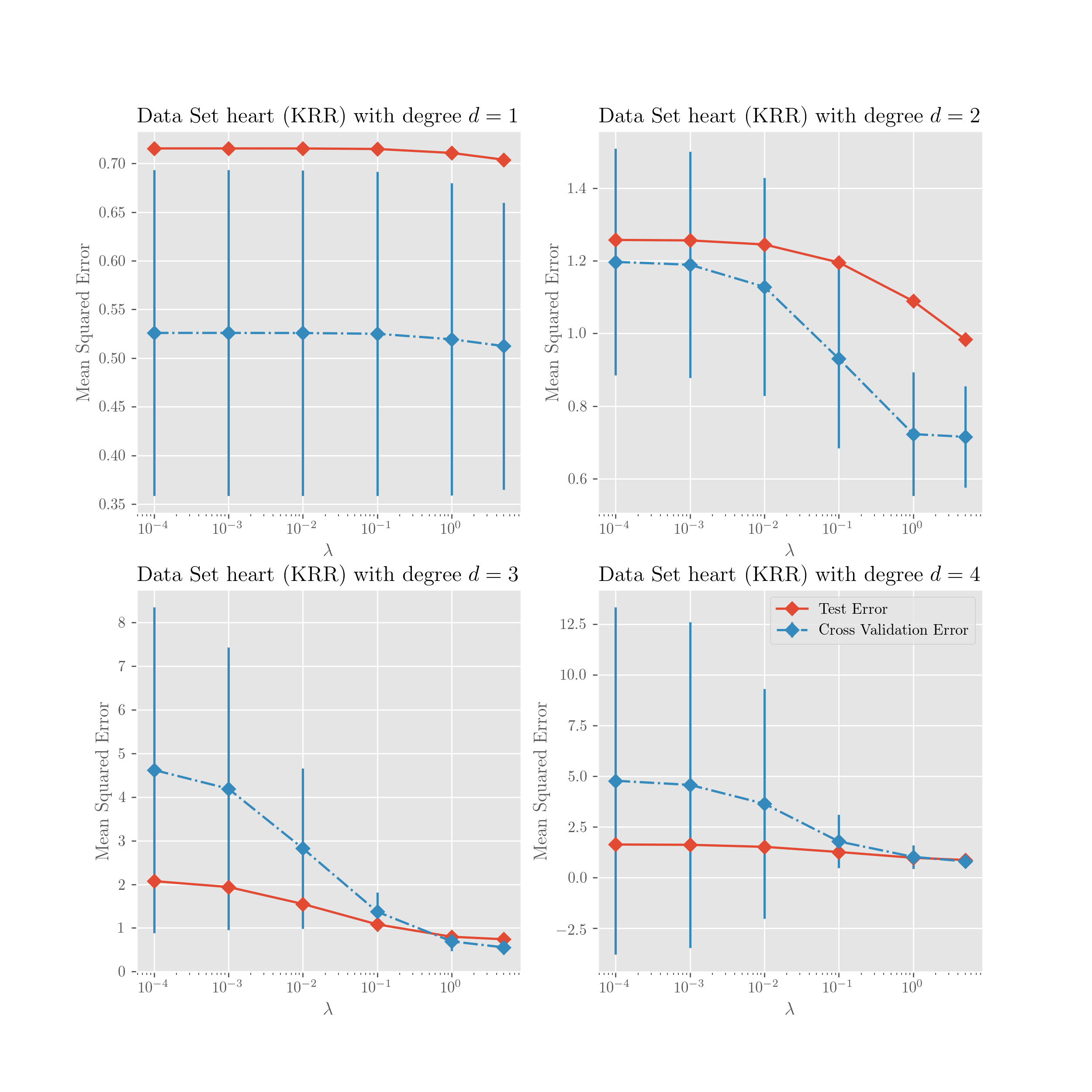}
\caption{PGD results for regression on dataset \texttt{Heart}.}
\end{figure}


\begin{figure}[H]
\includegraphics[width=\textwidth]{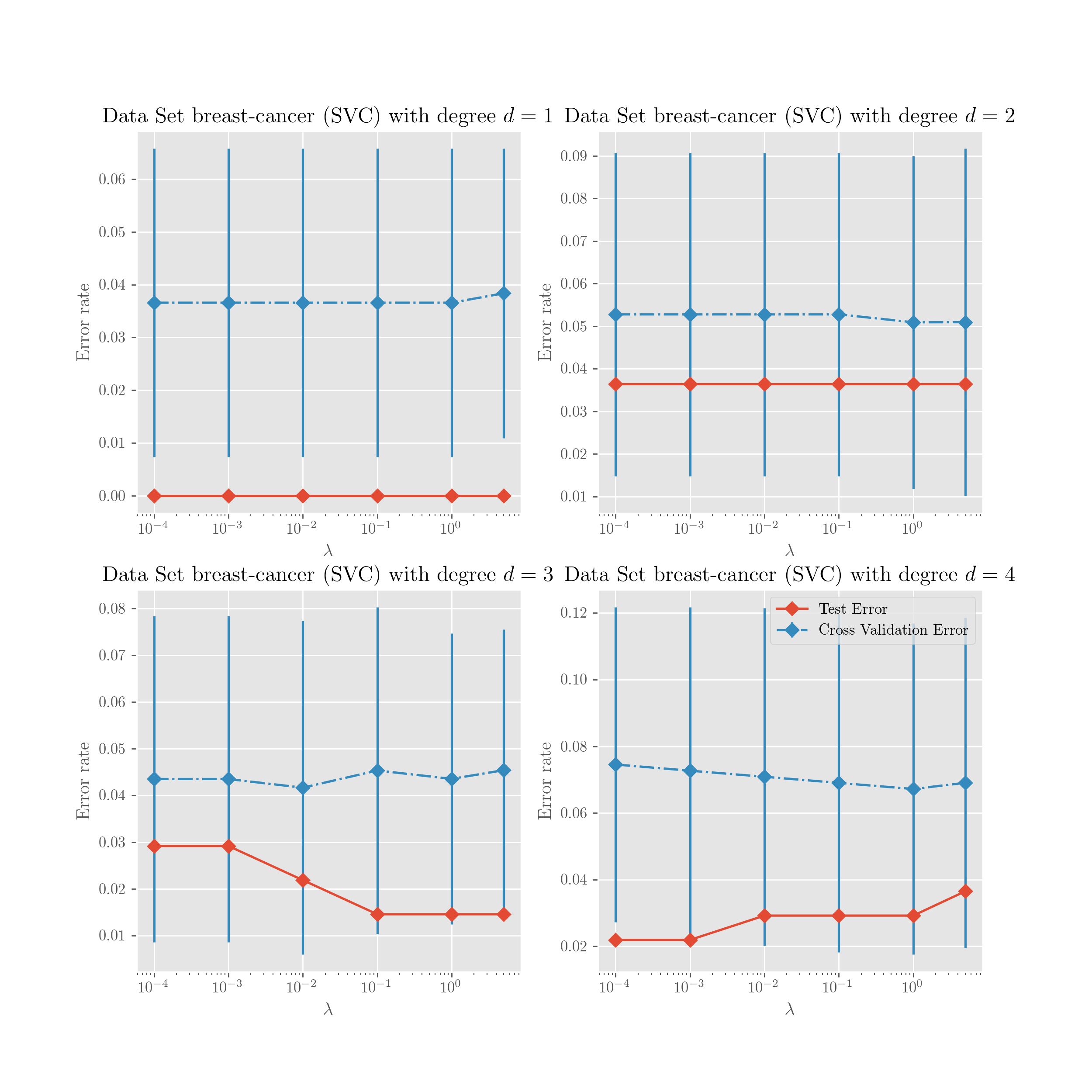}
\caption{PGD results for classification on dataset \texttt{Breast}.}
\end{figure}


\begin{figure}[H]
\includegraphics[width=\textwidth]{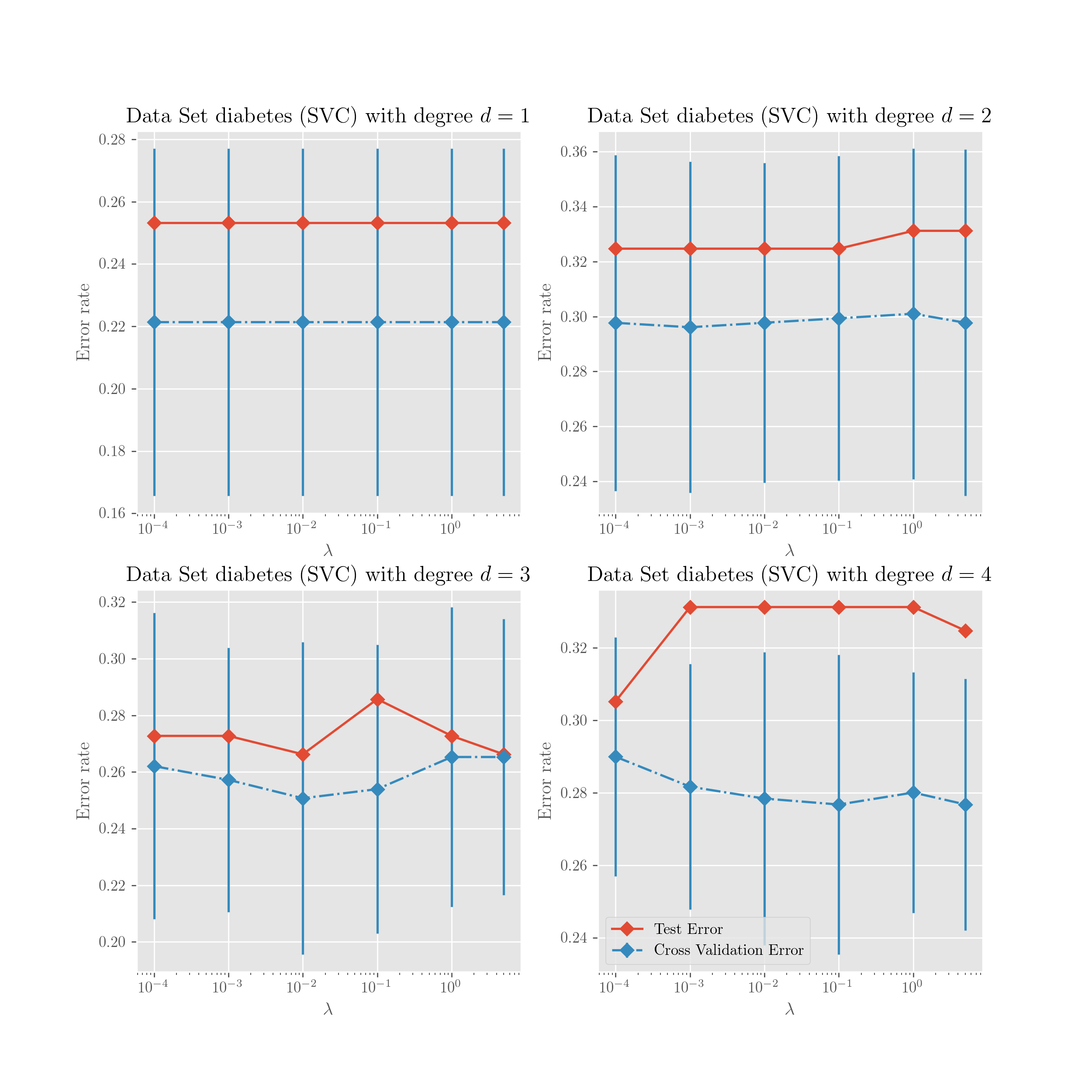}
\caption{PGD results for classification on dataset \texttt{Diabetes}.}
\end{figure}


\begin{figure}[H]
\includegraphics[width=\textwidth]{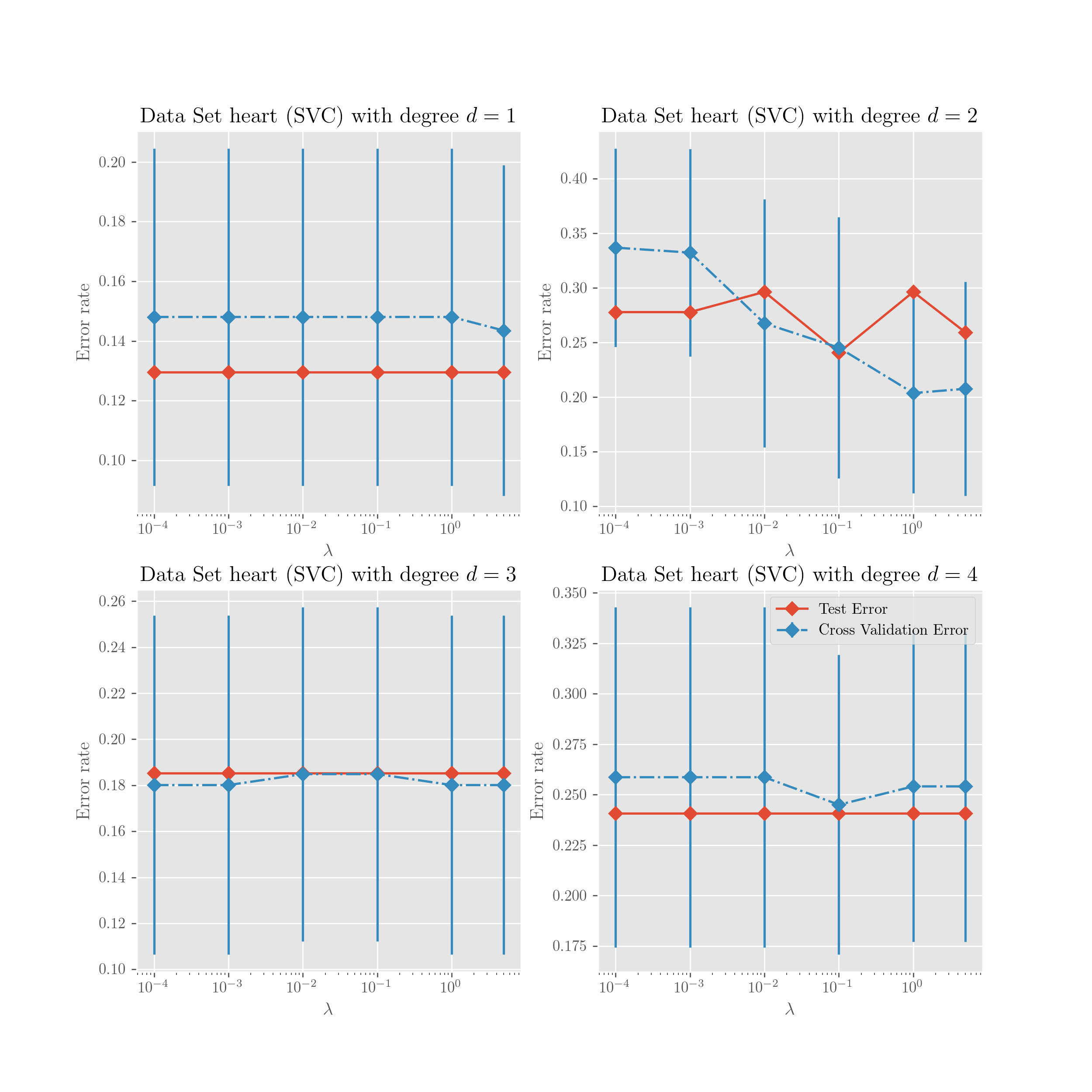}
\caption{PGD results for classification on dataset \texttt{Heart}.}
\end{figure}


\subsection{Proof of Proposition \ref{prop:geo} }\label{app:proof_}

In the section we prove the inequalities:
\begin{equation}\label{RHS:upper}
\bal^T\bK_\bu^{\circ 2}\,\bal \leq B\norm{\bal}^2,
\end{equation}
\begin{equation}\label{LHS:upper}
4\,\bal^T(\bK_\bmu\circ\bK_\bu) (\bK_\bmu^{\circ 2} + \lambda I)^{-1} (\bK_\bmu\circ\bK_\bu)\, \bal \leq \frac{4\norm{\bal}^2}{\lambda}C\,,
\end{equation}
\begin{equation}\label{RHS:lower}
\bal^T\bK_\bu^{\circ 2}\,\bal \geq D\norm{\bal}^2,
\end{equation}
\begin{equation}\label{LHS:lower}
4\,\bal^T(\bK_\bmu\circ\bK_\bu) (\bK_\bmu^{\circ 2} + \lambda I)^{-1} (\bK_\bmu\circ\bK_\bu)\, \bal \geq \frac{4\norm{\bal}^2}{FE}e^{-\lambda / E}\,,
\end{equation}
where 
\begin{align*}
B & = \max_{\bu \st \norm{\bu}=1} \lambda_\mathrm{max}(\bK_\bu^{\circ 2}), \\
C & = \max_{\bu \st\norm{\bu} = 1,\,\bmu \in \mathcal{M}}\norm{(\bK_\bmu \circ \bK_\bu)}^2, \\
D & = \min_{\bu \st \norm{\bu}=1} \lambda_\mathrm{min}(\bK_\bu^{\circ 2}), \\
E & = \max_{\bmu \in\mathcal{M}} \lambda_\mathrm{max}(\bK_\bu^{\circ 2}), \\
F & = \max_{\bu \st\norm{\bu} = 1,\,\bmu \in \mathcal{M}}\norm{(\bK_\bmu \circ \bK_\bu)^{-1}}^2 .
\end{align*}
It holds that (since $\bK_\bu^{\circ 2}$ is symmetric PSD)
\begin{align*}
\bal^T\bK_\bu^{\circ 2}\,\bal & \leq \lambda_\mathrm{max}(\bK_\bu^{\circ 2})\norm{\bal}^2 \leq B \norm{\bal}^2.
\end{align*}
This shows (\ref{RHS:upper}). For the LHS it holds instead:
\begin{align*}
4\,\bal^T(\bK_\bmu\circ\bK_\bu) (\bK_\bmu^{\circ 2} + \lambda I)^{-1} (\bK_\bmu\circ\bK_\bu)\, \bal & \leq 4 \norm {(\bK_\bmu\circ\bK_\bu)\, \bal}^2 \lambda_\mathrm{max}\parr*{(\bK_\bmu^{\circ 2} + \lambda I)^{-1}} \\ 
& \leq 4 \norm{\bK_\bmu \circ \bK_\bu}^2\norm {\bal}^2 \lambda_\mathrm{min}\parr*{\bK_\bmu^{\circ 2} + \lambda I}^{-1} \\
& \leq \frac{4\norm{\bal}^2}{\lambda}C,
\end{align*}
since $\lambda_\mathrm{min}\parr*{\bK_\bmu^{\circ 2} + \lambda I} \geq \lambda$. This shows (\ref{LHS:upper}). The next bounds are proved similarly. First, we have that 
(since $\bK_\bu^{\circ 2}$ is symmetric PSD)
\begin{align*}
\bal^T\bK_\bu^{\circ 2}\,\bal & \geq \lambda_\mathrm{min}(\bK_\bu^{\circ 2})\norm{\bal}^2 \geq D \norm{\bal}^2.
\end{align*}
This shows (\ref{RHS:lower}). Finally, for the LHS it holds:
\begin{multline*}
4\,\bal^T(\bK_\bmu\circ\bK_\bu) (\bK_\bmu^{\circ 2} + \lambda I)^{-1} (\bK_\bmu\circ\bK_\bu)\, \bal \geq \\ \geq 4\norm{\bal}^2 \lambda_\mathrm{min}\parr*{(\bK_\bmu\circ\bK_\bu) (\bK_\bmu^{\circ 2} + \lambda I)^{-1} (\bK_\bmu\circ\bK_\bu)} \\  = 4\norm{\bal}^2 \lambda_\mathrm{max}\parr*{(\bK_\bmu\circ\bK_\bu)^{-1} (\bK_\bmu^{\circ 2} + \lambda I) (\bK_\bmu\circ\bK_\bu)^{-1}}^{-1}.
\end{multline*}
Since 
\begin{multline*}
\lambda_\mathrm{max}\parr*{(\bK_\bmu\circ\bK_\bu)^{-1} (\bK_\bmu^{\circ 2} + \lambda I) (\bK_\bmu\circ\bK_\bu)^{-1}} = \\ = \max_{\bv\st\norm{\bv}=1}\bv^T(\bK_\bmu\circ\bK_\bu)^{-1} (\bK_\bmu^{\circ 2} + \lambda I) (\bK_\bmu\circ\bK_\bu)^{-1}\bv \\
 \leq \norm{(\bK_\bmu\circ\bK_\bu)^{-1}}^2 \max_{\bv\st\norm{\bv}=1}\bv^T (\bK_\bmu^{\circ 2} + \lambda I)\bv \\
 \leq F (\lambda + \lambda_\mathrm{max}(\bK^{\circ 2}_\bmu)) \leq F (\lambda + E) = FE \parr*{1 + \frac{\lambda}{E}},
\end{multline*}
it follows that 
\begin{equation*}
4\,\bal^T(\bK_\bmu\circ\bK_\bu) (\bK_\bmu^{\circ 2} + \lambda I)^{-1} (\bK_\bmu\circ\bK_\bu)\, \bal \geq \frac{4\norm{\bal}^2}{FE}\frac{1}{1 + \frac{\lambda}{E}}.
\end{equation*}
Since $(1+x)e^{-x}\leq 1$ for $x\geq -1$, then it holds
\begin{equation*}
4\,\bal^T(\bK_\bmu\circ\bK_\bu) (\bK_\bmu^{\circ 2} + \lambda I)^{-1} (\bK_\bmu\circ\bK_\bu)\, \bal \geq \frac{4\norm{\bal}^2}{FE}e^{-\lambda / E}.
\end{equation*}
This shows (\ref{LHS:lower}) and thus concludes the proof.

\subsection{Other ideas: manifold optimization}

In \cite{cortes2009learning}, authors provide the PGD for learning quadratic kernels, i.e.
\begin{align}
\bK_\bmu = \parr[\Big]{\sum_{i=1}^p\mu_i\bK_i}^{\circ 2}.
\end{align}
As we have seen above, one geometrical property of problem \ref{lk_krr_linear} is that any minimizer of the function
\begin{equation}
F(\bmu) = \by^T (\bK_\bmu +\lambda I)^{-1}\by     .
\end{equation}
on the manifold
\begin{align}
\mathcal{M} = \bra{\bmu \st \bmu\geq 0,\, \norm{\bmu-\bmu_0} \leq \Lambda}.
\end{align}
can only be achieved on the boundary of the manifold $\partial\mathcal{M}$.
Assuming that $\bmu_0 > 0$ and $\Lambda \leq \|\bmu_0\|_\infty$, then
\begin{align}
\mathcal{M} = \bra{\bmu \st \bmu\geq 0,\, \norm{\bmu-\bmu_0} \leq \Lambda} = \bra{\bmu \st \norm{\bmu-\bmu_0} \leq \Lambda}
\end{align}
forms a smooth submanifold of the vector space $\RR^p$, and we can try to do optimization on manifold $\calM$ with objective function $F(\bmu)$.

We briefly introduce some notation. Let $f$ be a real-valued function from $\calN$ to $\RR$ and $F(\bmu):\calM\to\calN$ be a smooth mapping between two manifolds $\calM$ and $\calN$, and let $\xi_\bmu$ be a tangent vector at a point $\bmu$ of $\calM$, then the mapping $\D F(\bmu)[\xi_\bmu]$ defined by
\begin{align}
\D F(\bmu)[\xi_\bmu]f\defeq\xi_\bmu(f\circ F)
\end{align}
is a tangent vector to $\calN$ at $F(\bmu)$. The tangent vector $DF(\bmu)[\xi_\bmu]$ is realized by $F\circ \gamma$, where $\gamma$ is any curve that realizes $\xi_\bmu$. The mapping
\begin{align}
\D F(\bmu):T_\bmu\calM\to T_{F(\bmu)}\calN, 
\end{align}
(where $T_{F(\bmu)}\calN: \xi\to\D F(\bmu)[\xi]$) is a linear mapping called tangent map. If $\calN=\RR$, we simply have
\begin{align}
\D F(\bmu)[\xi_\bmu] = \xi_\bmu F.
\end{align}

In the case that we are interested in, the $p-1$ dimensional sphere $\SSS^{p-1}$ is naturally embedded in $\RR^p$. If $\bar f$ is a real-valued function in the neighborhood $\calU$ of $\bmu$ in $\RR^p$ and $f$ the restriction of $\bar f$ to $\calU\cap\calM$, then we have tangent vector
\begin{align}
\dot\gamma f = \left. \frac{d}{dt} \bar f(\gamma(t))\right|_{t=0} = \D \bar f(\bmu)[\gamma^\prime(0)].
\end{align}
Let $\gamma: t\mapsto\bmu(t)$ be a curve in the sphere $\SSS^{p-1} + \bmu_0$ through $\hat\bmu$ at $t=0$. Because $\bmu(t)\in\SSS^{p-1} + \bmu_0$ for all $t$, we have
\begin{align}
(\bmu(t)-\bmu_0)^T(\bmu(t)-\bmu_0) = \Lambda,
\end{align}
i.e.
\begin{align}
\dot\bmu(t)^T\bmu(t) + \bmu(t)^T\dot\bmu(t) - \bmu_0^T\dot\bmu(t) - \dot\bmu(t)^T\bmu_0 = 0.
\end{align}
Hence, it is not hard to see that
\begin{align}
    T_{\hat\bmu}(\SSS^{p-1}+\bmu_0) = \ker(\D \bar f(\hat\bmu)) = \left\{\dot\bmu\in\RR^p\st (\hat\bmu - \bmu_0)^T\dot\bmu = 0\right\}.
\end{align}

Given a smooth scalar field $f$ on a Riemannian manifold $\calM$, the gradient of $f$ at $\bmu$ is denoted by $\grad f(\bmu)$, which is defined as the unique element of $T_\bmu\calM$ that satisfies
\begin{align}
\inner{\grad f(\bmu)}{\xi}_\bmu = \D f(\bmu)[\xi],\quad \forall \xi\in T_\bmu\calM.
\end{align}
Let $\calM$ be an embedded submanifold of a Riemannian manifold $\overline \calM$. Every tangent space $T_\bmu\calM$ can be regarded as a subspace of $T_\bmu\overline\calM$ and the orthogonal complement of $T_\bmu\calM$ in $T_\bmu\overline\calM$ is called the normal space to $\calM$ at $\bmu$, i.e.
\begin{align}
(T_\bmu\calM)^\perp = \{\xi\in T_\bmu\overline\calM\st \inner{\xi}{\zeta} = 0,\quad \forall \zeta\in T_\bmu\calM\}.
\end{align}

On the sphere $\SSS^{p-1} + \bmu_0$, considered as Riemannian submanifold of $\RR^p$, the inner product inherited from the standard inner product on $\RR^p$ is given by
\begin{align}
\inner{\xi}{\eta}_\bmu \defeq \xi^T \eta.
\end{align}
Hence, the normal space is
\begin{align}
(T_{\hat\bmu}(\SSS^{p-1} + \bmu_0))^\perp = \{\alpha(\hat\bmu - \bmu_0) \st \alpha\in\RR\},
\end{align}
and the orthogonal projections are given by
\begin{gather}
\mathrm{P}_{\hat\bmu}\xi = (I - \frac{1}{\Lambda^2} (\hat\bmu - \bmu_0)(\hat\bmu - \bmu_0)^T)\xi;\\
\mathrm{P}^\perp_{\hat\bmu}\xi = \frac{1}{\Lambda^2} (\hat\bmu - \bmu_0)(\hat\bmu - \bmu_0)^T\xi.
\end{gather}

As the objective function of learning kernel problem is
\begin{equation}
F(\bmu) = \by^T (\bK_\bmu +\lambda I)^{-1}\by,
\end{equation}
it is easy to see that 
\begin{align}
(\nabla F(\bmu))_k = \frac{\partial F}{\partial \mu_k} = -2 \*\alpha^T \*U_k\*\alpha,
\end{align}
where $\*U_k = (\sum_{r=1}^p \mu_r\bK_r)\circ \bK_k$. We can then obtain $\grad\, F$ by 
\begin{align}
\grad\, F = \mathrm{P}_{\bmu} \nabla F = (I - \frac{1}{\Lambda^2} (\hat\bmu - \bmu_0)(\hat\bmu - \bmu_0)^T) \nabla F.
\end{align}

Let $\calM$ be a Riemannian submanifold of a Riemannian manifold $\overline\calM$ and let $\nabla$ and $\overline\nabla$ denote the Riemannian connections on $\calM$ and $\overline\calM$. Then
\begin{align}\label{eq:second}
    \nabla_{\eta_\bmu}\xi = \mathrm{P}_\bmu \overline\nabla_{\eta_\bmu}\xi
\end{align}
for all $\eta_\bmu\in T_\bmu\calM$. When $\calM$ is a Riemannian submanifold of a Euclidean space, \eqref{eq:second} reads
\begin{align}
    \nabla_{\eta_\bmu}\xi = \mathrm{P}_\bmu(\D\, \xi(\bmu)[\eta_\bmu]).
\end{align}
On the sphere $\SSS^{p-1}+\bmu_0$, viewed as a Riemannian submanifold of the Euclidean space $\RR^p$, the Hessian is given by
\begin{align}
    \Hess\, F(\bmu)[\eta_\bmu] & = \nabla_{\eta_\bmu}\,\grad\, F\\
    & = \left(I - \frac{1}{\Lambda^2} (\hat\bmu - \bmu_0)(\hat\bmu - \bmu_0)^T\right) \D\,\grad F(\bmu)[\eta_\bmu]
\end{align}
With $\grad\, F$ and $\Hess\, F$ in hand, we can perform any local optimization algorithm on the matrix manifold. There is still a need of retraction back onto the manifold after every step of moving on tangent plane, i.e.
\begin{align}
R_\bmu(\xi) = \frac{\bmu+\xi}{\|\bmu+\xi\|}.
\end{align}
This gives a series of alternatives of local methods on learning quadratic kernels. One of them can be trust region method as the algorithm reported
below. Even if we have not tried to implement this, we think it could be interesting to compare its performances against the PGD algorithm previously described.

\begin{algorithm}
\caption{Trust region on matrix manifold}
\label{alg:manifold}
\begin{algorithmic}[1]
\STATE Given $\hat \Delta > 0$, $\Delta_0\in(0,\hat\Delta)$, and $\eta = [0,\frac{1}{4}]$ 

\FOR{$k=0,1,2,\ldots$}
    \STATE $\bmu_k = \argmin f_k + (\grad f)^T_k \bmu + \frac{1}{2} \bmu^T (\Hess f)^T_k\bmu, \quad \|\bmu\|\leq \Delta_k $
    \STATE $\rho_k = \frac{f(x_k) - f(x_k+\bmu_k)}{m_k(0)-m_k(\bmu_k)}$
    \IF{$\rho_k<\frac{1}{4}$}
    \STATE $\Delta_{k+1}=\frac{1}{4}\Delta_k$
    \ELSE
    \IF{$\rho_k>\frac{3}{4}, \|\bmu_k\|=\Delta_k$}
    \STATE $\Delta_{k+1} = \min(2\Delta_k,\hat\Delta)$
    \ELSE
    \STATE $\Delta_{k+1}=\Delta_k$
    \ENDIF
    \ENDIF
    \IF{$\rho_k>\eta$}
    \STATE $x_{k+1} = x_k + \bmu_k$
    \ELSE
    \STATE $x_{k+1} = x_k$
    \ENDIF
    \STATE Retract $x_{k+1}$ onto the manifold
\ENDFOR
\end{algorithmic}
\end{algorithm}

\end{document}